\crefname{section}{Sec.}{Secs.}
\Crefname{section}{Section}{Sections}
\Crefname{table}{Table}{Tables}
\crefname{table}{Tab.}{Tabs.}
\theoremstyle{plain}
\newtheorem{theorem}{Theorem}[section]
\newtheorem{proposition}[theorem]{Proposition}
\newtheorem{lemma}[theorem]{Lemma}
\newtheorem{corollary}[theorem]{Corollary}
\theoremstyle{definition}
\theoremstyle{remark}
\crefname{appsection}{Appendix}{Appendices}
\Crefname{appsection}{Appendix}{Appendices}
\newcommand{\multiline}[1]{%
  \begin{tabularx}{\dimexpr\linewidth-\ALG@thistlm}[t]{@{}X@{}}
    #1
  \end{tabularx}
}
\definecolor{gray}{rgb}{0.5,0.5,0.5}
\definecolor{light-gray}{gray}{0.95}
\definecolor{DarkBlue}{RGB}{72, 116, 203}
\definecolor{royalblue}{rgb}{0.25, 0.41, 0.88}
\definecolor{mediumgray}{rgb}{0.5,0.5,0.5}
\definecolor{AV-red}{HTML}{FF0000}
\definecolor{CBV-purple}{HTML}{9500ff}
\definecolor{BV-blue}{HTML}{001eff}
\newcommand{\tablestyle}[2]{\setlength{\tabcolsep}{#1}\renewcommand{\arraystretch}{#2}\centering\footnotesize}
\newcommand{\method}{\textit{RIFT}}
\newcommand{\pmsd}[1]{{\color{mediumgray}{\scriptsize $\pm$ #1}}}
\newmdenv[linecolor=light-gray,backgroundcolor=light-gray]{graybox}
\newcommand{\highlighttransp}[2]{%
  \tcbox[colback=#1!50!white, colframe=white,
    enhanced, on line, boxrule=0pt, boxsep=0pt,
    left=1pt, right=1pt, top=1pt, bottom=1pt,
    opacityback=0.3]{#2}%
}
\def\eqref#1{equation~\ref{#1}}
\def\1{\bm{1}}
\DeclareMathAlphabet{\mathsfit}{\encodingdefault}{\sfdefault}{m}{sl}
\SetMathAlphabet{\mathsfit}{bold}{\encodingdefault}{\sfdefault}{bx}{n}
\title{RIFT: Group-Relative RL Fine-Tuning for Realistic and Controllable Traffic Simulation}
\author{
Keyu Chen$^{1}$~
Wenchao Sun$^{1}$~
Hao Cheng$^{1}$~
Sifa Zheng$^{1}$ \\
[2mm]
$^1$~School of Vehicle and Mobility, Tsinghua University ~
}
\begin{document}

\maketitle

\begin{abstract}
  Achieving both realism and controllability in closed-loop traffic simulation remains a key challenge in autonomous driving. Dataset-based methods reproduce realistic trajectories but suffer from \textit{covariate shift} in closed-loop deployment, compounded by simplified dynamics models that further reduce reliability. Conversely, physics-based simulation methods enhance reliable and controllable closed-loop interactions but often lack expert demonstrations, compromising realism. To address these challenges, we introduce a dual-stage AV-centric simulation framework that conducts imitation learning pre-training in a data-driven simulator to capture trajectory-level realism and route-level controllability, followed by reinforcement learning fine-tuning in a physics-based simulator to enhance style-level controllability and mitigate covariate shift. In the fine-tuning stage, we propose \method, a novel group-relative RL fine-tuning strategy that evaluates all candidate modalities through group-relative formulation and employs a surrogate objective for stable optimization, enhancing style-level controllability and mitigating covariate shift while preserving the trajectory-level realism and route-level controllability inherited from IL pre-training. Extensive experiments demonstrate that \method\ improves realism and controllability in traffic simulation while simultaneously exposing the limitations of modern AV systems in closed-loop evaluation. Project Page: \href{https://currychen77.github.io/RIFT/}{https://currychen77.github.io/RIFT/}
\end{abstract}

\section{Introduction}
\label{sec:intro}
Reliable closed-loop traffic simulation is critical for developing advanced autonomous vehicle (AV) systems, supporting training and evaluation~\cite{feng2023d2rl, ding2023survey}. An ideal traffic simulation should possess two key properties: \textit{realistic}, reflecting real-world driving behavior; \textit{controllable}, enabling customizable traffic simulation according to user requirements.

To balance these two essential properties, existing traffic simulation methods adopt different trade-offs depending on the underlying platform, often favoring either realism or controllability, as illustrated in \Cref{fig:intro}. Methods based on data-driven simulators exploit real-world data to generate realistic trajectories by learning multimodal behavioral patterns through imitation learning (IL)~\cite{ngiam2021scene, sun2022intersim, feng2023trafficgen, mahjourian2024unigen}. In addition to realism, recent studies on data-driven simulators have pursued controllability by conditioning scenario generation on user-specified inputs—such as text conditions~\cite{zhang2024chatscene,tan2023LCTGen}, goal conditions~\cite{tan2024prosim, rowe2024ctrlsim}, or cost functions~\cite{zhong2023CTG, jiang2023motiondiffuser, zhong2023CTG++}—producing scenarios that are both realistic and aligned with user requirements. However, their open-loop training paradigm introduces the \textit{covariate shift} problem during closed-loop deployment, arising from the distribution mismatch between training and deployment states. Moreover, data-driven simulators often adopt simplified environment dynamics~\cite{gulino2023waymax, caesar2021nuplan}, resulting in unrealistic interactions and state transitions that further degrade closed-loop reliability. In contrast, physics-based simulators provide fine-grained control over scenario configuration through physical engines, enabling high-fidelity closed-loop interactions. Nonetheless, the absence of expert demonstrations makes it challenging to reproduce realistic behavior. To mitigate this, several approaches employ reinforcement learning (RL) to directly acquire controllable behaviors through interaction with the simulator~\cite{ding2021multimodal, king, chen2024frea, zhang2023cat}, although often at the cost of realism. Other approaches enhance realism by injecting real-world traffic data into physics-based simulators~\cite{osinski2020carlareal, li2023scenarionet}, but typically rely on log-replay or rule-based simulation, limiting controllability and interactivity. Despite recent advances, a fundamental trade-off persists between realism and controllability across both paradigms, making it challenging to achieve both simultaneously in interactive closed-loop scenarios.

\begin{figure}[t]
    \centering
    \includegraphics[width=\textwidth]{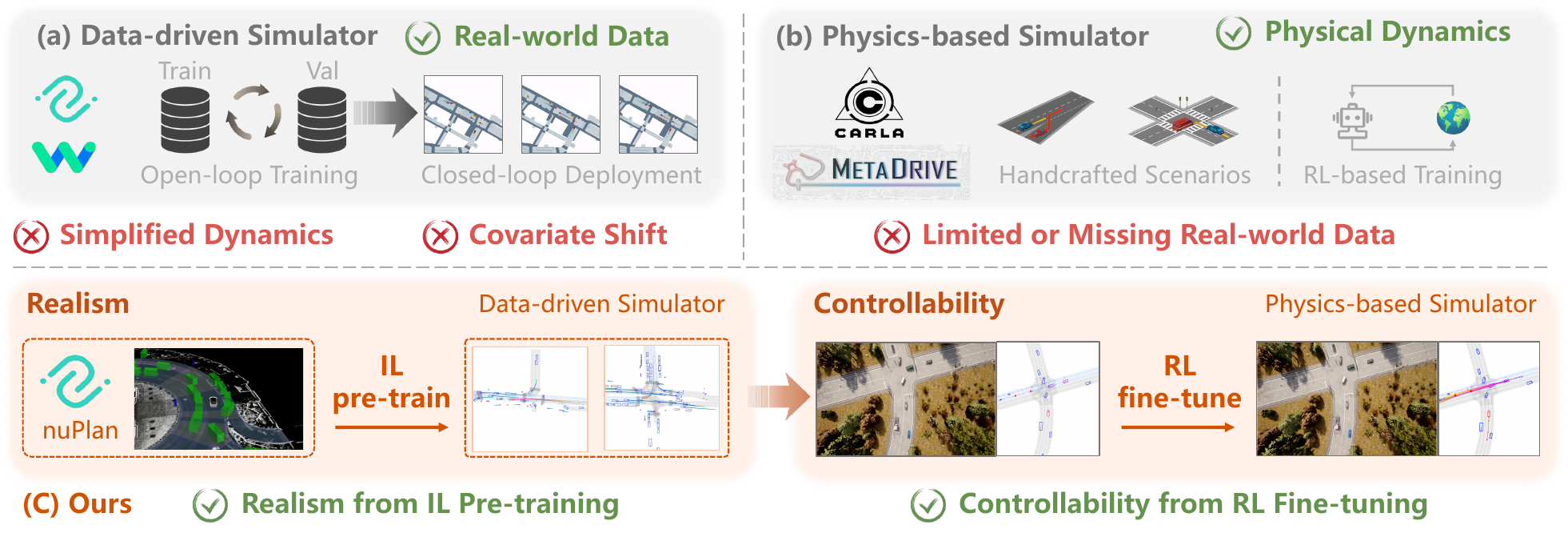}
    \caption{\small
    \textbf{Traffic Simulation across Different Platforms.} (a) Data-driven Simulator: employs imitation learning to replicate real-world driving behaviors, but suffers from covariate shift and simplified dynamics; (b) Physics-based Simulator: enables controllable scenario construction via high-fidelity closed-loop interaction, but lacks large-scale real-world data; (c) Our framework: combines IL pre-training in a data-driven simulator to ensure realism with RL fine-tuning in a physics-based simulator to enhance controllability. 
    }
    \vspace{-0.8em}
    \label{fig:intro}
\end{figure}

Drawing inspiration from the widely adopted ``pre-training and fine-tuning'' paradigm in large language models (LLMs)~\cite{rafailov2023dpo, yu2025dapo, shao2024grpo}, we combine the strengths of two platforms. Specifically, we perform IL pre-training in a data-driven simulator to capture realism, followed by RL fine-tuning in a physics-based simulator to address covariate shift and enhance controllability.

Building on this insight, we propose a dual-stage AV-centric simulation framework (\Cref{fig:intro}) that unifies the strengths of data-driven and physics-based simulators through a ``pre-training and fine-tuning'' paradigm, balancing realism and controllability in traffic simulation. In Stage 1, we pre-train a planning model via IL to generate realistic and multimodal trajectories conditioned on given route-level reference lines. This stage achieves both trajectory-level realism, capturing realistic and multimodal behavior patterns, and route-level controllability, guaranteeing compliance with prescribed reference lines. In Stage 2, we identify critical background vehicles (CBVs) through route-level interaction analysis, focusing on those most likely to interact with the AV. For these CBVs, we leverage the IL pre-trained model from Stage 1, conditioned on their route-level reference lines, to automatically generate realistic and multimodal trajectories that remain route-level controllable. On top of these generated candidates, we introduce \method, a novel group-relative RL fine-tuning strategy that improves controllability over driving styles and mitigates covariate shift. Unlike prior methods~\cite{zhang2023rtr,peng2024improving} that fine-tune only the best trajectory or action, \method\ evaluates all candidate modalities via group-relative formulation~\cite{shao2024grpo} and employs a surrogate objective for stable optimization, enhancing style-level controllability and alleviating covariate shift while preserving the trajectory-level realism and route-level controllability established in Stage 1.

Our contributions can be summarized as:

\begin{itemize}[leftmargin=*]
    \item We propose a dual-stage AV-centric simulation framework that combines IL pre-training in a data-driven simulator and RL fine-tuning in a physics-based simulator, leveraging their complementary strengths to balance realism and controllability.
    \item We propose \method, a novel group-relative RL fine-tuning strategy that evaluates all candidate modalities through group-relative formulation and employs a surrogate objective for stable optimization, improving style-level controllability and alleviating covariate shift, while retaining the trajectory-level realism and route-level controllability inherited from IL pre-training.
    \item Extensive experiments demonstrate that \method\ enhances the realism and controllability of traffic simulation, effectively exposing the limitations of modern AV systems under closed-loop settings.
\end{itemize}

\section{Related Work}
\label{sec:related_work}
\textbf{Realistic Traffic Simulation.} 
A variety of generative architectures have been explored for realistic traffic simulation~\cite{tan2021scenegen, zhang2023trafficbots, yang_infgen}, including conditional variational autoencoders~\cite{trafficsim, rempe2022strive, xu2023bits} and diffusion-based models~\cite{jiang2024scenediffuser, chitta2024sledge, zhou2024Nexus, Tan_SceneDiffuser++}. However, maintaining long-term stability remains challenging due to the \textit{covariate shift} between open-loop training and closed-loop deployment. Recent methods such as SMART~\cite{wu2024smart}, GUMP~\cite{hu2024gump}, Trajeglish~\cite{philiontrajeglish}, and MotionLM~\cite{seff2023motionlm} address this issue by formulating traffic simulation as a next-token prediction (NTP) task, leveraging discrete action spaces to improve closed-loop robustness.
Despite these advances, most approaches remain confined to data-driven simulation platforms~\cite{gulino2023waymax, caesar2021nuplan, Dauner2024navsim, montali2023wosac}, which typically adopt simplified environment dynamics. Such oversimplifications limit the reliability of long-term closed-loop interactions, especially in complex and interactive scenarios.

\textbf{Controllable Traffic Simulation.} 
Recent studies have introduced diverse conditioning mechanisms to generate traffic scenarios aligned with user preferences. CTG~\cite{zhong2023CTG} and MotionDiffuser~\cite{jiang2023motiondiffuser} employ diffusion models conditioned on cost-based signals. Language-conditioned methods, including CTG++~\cite{zhong2023CTG++}, LCTGen~\cite{tan2023LCTGen}, and ProSim~\cite{tan2024prosim}, enable user specification through language prompts. Other strategies adopt guided sampling (SceneControl~\cite{lu2024scenecontrol}), retrieval-based generation (RealGen~\cite{ding2024realgen}), or reward-driven causality modeling (CCDiff~\cite{lin2024ccdiff}). Despite improving controllability, existing approaches remain confined to open-loop settings or simplified dynamics, and primarily target low-level control. High-level attributes such as driving style are underexplored, leaving the integration of realism and controllability in closed-loop simulation an open challenge.

\textbf{Closed-Loop Fine-Tuning.} 
Covariate shift—the mismatch between open-loop training and closed-loop deployment—remains a key challenge for reliable long-term traffic simulation. To address this, recent work explores fine-tuning strategies in the closed-loop setting. Hybrid IL and RL methods~\cite{zhang2023rtr, peng2024improving, lu2023imitationisnotenough} enhance robustness but typically fine-tune the entire model via RL, which often compromises realism due to the difficulty of designing human-aligned reward functions. Supervised fine-tuning approaches such as CAT-K~\cite{zhang2025catk} show strong performance but rely on expert demonstrations, limiting scalability. TrafficRLHF~\cite{cao2024trafficrlhf} improves alignment through reinforcement learning with human feedback (RLHF), but demands costly human input and suffers from reward model instability. Moreover, most existing methods focus on optimizing the best action or trajectory, ignoring the inherent multimodality of traffic simulation, thus limiting behavioral diversity during fine-tuning.

\section{Background}

\subsection{Task Redefinition}
\label{subsec:task_refefinition}
Following the widely adopted paradigm for closed-loop training and evaluation in autonomous driving~\cite{jia2024bench2drive, xu2022safebench}, our simulation framework includes a single autonomous vehicle (AV) navigating a predefined global route, accompanied by multiple rule-based background vehicles (BVs), forming an AV-centric closed-loop simulation environment. These BVs either provide diverse interactive data for training or serve to evaluate the AV's robustness. Building upon this setup, we identify a subset of critical background vehicles (CBVs) that are more likely to interact with the AV. For these CBVs, the rule-based control is replaced with a well-trained planning model, enabling the synthesis of realistic and controllable behaviors in interactive closed-loop scenarios.

\subsection{CBV-Centric Realistic Trajectory Generation}
\label{subsec:RealisticTrajGeneration}
With recent advances in imitation learning, data-driven approaches have demonstrated strong performance in generating realistic, multimodal trajectories~\cite{zheng2025diffusionbased, Huang_2023_GameFormer, hu2023_uniad, jiang2023vad, sun2024sparsedrive}. In fully observable simulation environments, Pluto~\cite{cheng2024pluto} produces reliable, realistic, and multimodal trajectories by leveraging ground-truth states, while enabling route-level controllability through reference line encoding. These capabilities make Pluto a suitable choice for our planning model.

\textbf{CBV-Centric Scene Encoding.} Following~\cite{cheng2024pluto}, for each CBV in the scene, we extract its current feature $F_{\mathrm{cbv}}$, the historical features of neighboring vehicles $F_{\mathrm{neighbor}}$, and vectorized map features $F_{\mathrm{map}}$. These features are encoded into $E_{\mathrm{cbv}} \in \mathbb{R}^{1 \times D}$, $E_{\mathrm{neighbor}} \in \mathbb{R}^{N_{\mathrm{neighbor}} \times D}$, and $E_{\mathrm{map}} \in \mathbb{R}^{N_{\mathrm{map}} \times D}$, respectively, where $N_{\mathrm{neighbor}}$ and $N_{\mathrm{map}}$ denote the number of neighboring vehicles and map elements, and $D$ is the embedding dimension.  
To model the interactions among these embeddings, we concatenate them and apply a global positional embedding ($\mathrm{PE}$) to obtain the unified scene embedding $E_s\in \mathbb{R}^{(1 + N_{\mathrm{neighbor}} + N_{\mathrm{map}}) \times D}$ as:
\begin{align}
E_s &= \mathrm{concat}(E_{\mathrm{cbv}}, E_{\mathrm{neighbor}}, E_{\mathrm{map}}) + \mathrm{PE}.
\end{align}
This scene embedding $E_s$ is then passed through $N$ Transformer encoder blocks for feature aggregation, yielding the final CBV-centric scene embedding $E_{\mathrm{enc}}$. Each encoder block follows the standard Transformer formulation. Specifically, the $i$-th block is defined as:
\begin{equation}
\begin{aligned}
    &E^i_s = E^{i-1}_s + \mathrm{MHA}(\mathrm{LayerNorm}({E^{i-1}_s})), \\
    &E^i_s = E^{i}_s + \mathrm{FFN}\left(\mathrm{LayerNorm}(E^i_s)\right),
\end{aligned}
\label{eq:encoder_transformer}
\end{equation}
where $\mathrm{MHA}$ is the standard multi-head attention function, $\mathrm{FFN}$ is the feedforward network layer.

\textbf{Multimodal Trajectory Decoding.} To capture the multimodal nature of real-world driving behaviors, we adopt the longitudinal-lateral decoupling mechanism proposed in~\cite{cheng2024pluto}. This approach leverages reference line information to construct high-level lateral queries $Q_{\mathrm{lat}}\in\mathbb{R}^{N_{\mathrm{ref}} \times D}$, and introduces learnable longitudinal queries $Q_{\mathrm{lon}}\in\mathbb{R}^{N_{\mathrm{lon}} \times D}$. These are concatenated and projected to form the multimodal navigation query $Q_{\mathrm{nav}}\in\mathbb{R}^{N_{\mathrm{ref}} \times {N_{\mathrm{lon}}} \times D}$ as:
\begin{equation}
Q_{\mathrm{nav}} = \mathrm{Projection}(\mathrm{concat}(Q_{\mathrm{lat}}, Q_{\mathrm{lon}})),
\end{equation}
where $N_{\mathrm{ref}}$ and $N_{\mathrm{lon}}$ denote the number of reference lines and longitudinal anchors, respectively. The navigation query $Q_{\mathrm{nav}}$ and the scene embedding $E_{\mathrm{enc}}$ are then fed into $N$ decoder blocks to model lateral, longitudinal, and cross-modal interactions. Each decoder block is structured as:
\begin{equation}
\begin{aligned}
\hat{Q}^{i-1}_{\mathrm{nav}} &= \mathrm{SelfAttn}(\mathrm{SelfAttn}(Q^{i-1}_{\mathrm{nav}}, \mathrm{dim}=0), \mathrm{dim}=1),\\
Q^i_{\mathrm{nav}} &= \mathrm{CrossAttn}(\hat{Q}^{i-1}_{\mathrm{nav}}, E_{\mathrm{enc}}, E_{\mathrm{enc}}).
\end{aligned}
\end{equation}
$\mathrm{SelfAttn}$, $\mathrm{CrossAttn}$ denote multi-head self-attention and cross-attention, respectively. Given the decoder's final output $Q_{\mathrm{dec}}$, two MLP heads are applied to produce the CBV-centric multimodal trajectories $\mathcal{T}\in\mathbb{R}^{N_{\mathrm{ref}} \times N_{\mathrm{lon}} \times T \times 6}$ and their confidence scores $\mathcal{S}\in\mathbb{R}^{N_{\mathrm{ref}} \times N_{\mathrm{lon}}}$:
\begin{equation}
{\mathcal{T}} = \mathrm{MLP}(Q_{\mathrm{dec}}),\; {\mathcal{S}} = \mathrm{MLP}(Q_{\mathrm{dec}}),
\end{equation}
where $T$ is the prediction horizon, and each trajectory point $\tau_t^i$ encodes $[p_x, p_y, \cos{\theta}, \sin{\theta}, v_x, v_y]$. 

\section{Methodology}
Leveraging the IL pre-trained planning model described in \Cref{subsec:RealisticTrajGeneration}, realistic and multimodal trajectories can be generated across diverse scenarios conditioned on reference lines. However, the open-loop training paradigm leaves the policy vulnerable to covariate shift, even with contrastive learning~\cite{halawa2022action,wang2023fend} or data augmentation~\cite{cheng2024pluto}.
To address this, we propose \method, a group-relative RL fine-tuning strategy that enhances style-level controllability and mitigates covariate shift while preserving the trajectory-level realism and route-level controllability from pre-training. The following sections detail \method's implementation within the physics-based simulator.

\begin{figure}[t]
    \centering
    \vspace{-5pt}
    \includegraphics[width=\textwidth]{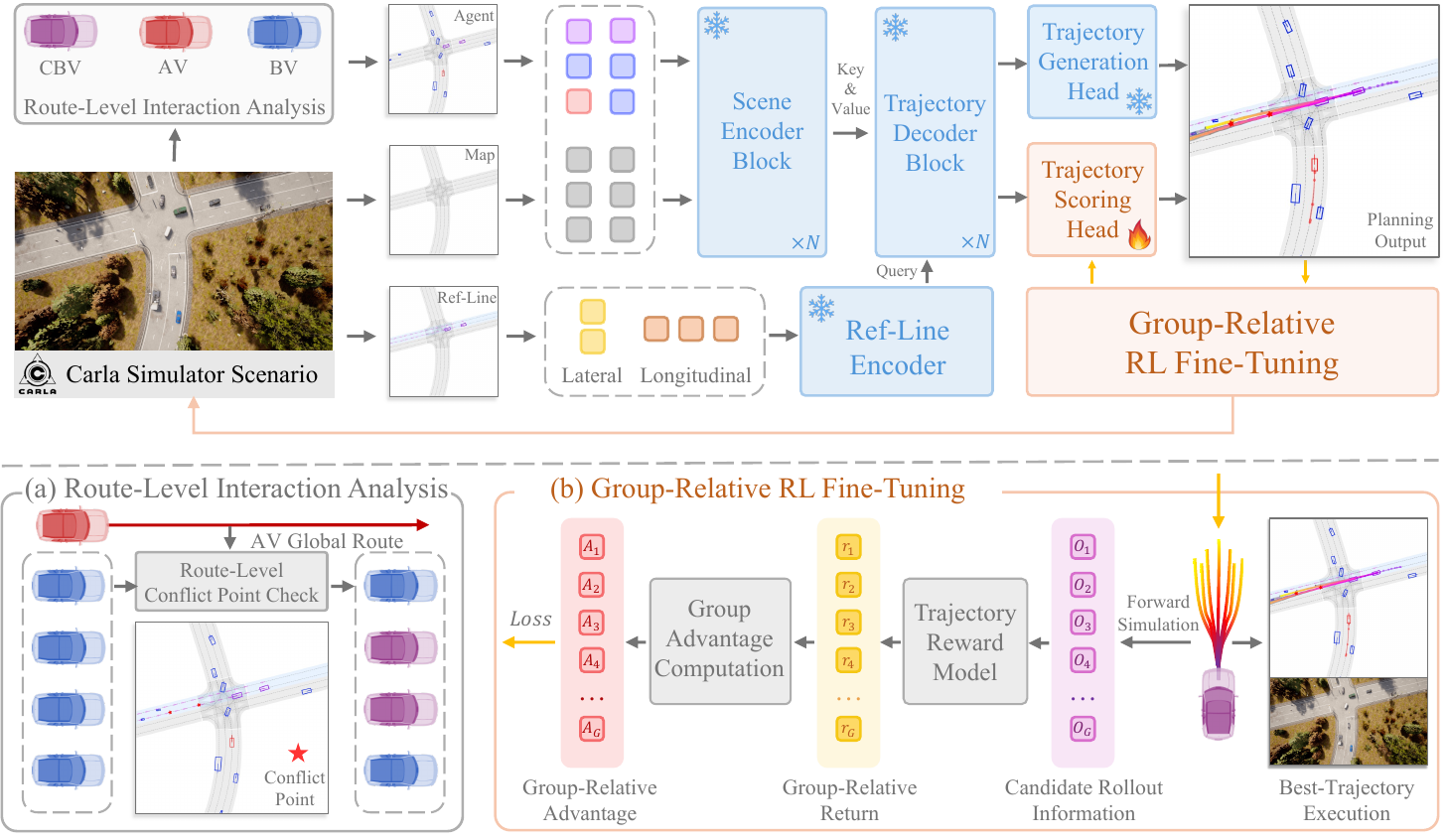}
    \caption{\small
    \textbf{Overview of the \method:} Building on the IL pre-trained model, \method\ performs route-level interaction analysis to identify critical background vehicles and the associated reference lines, enabling the generation of realistic and multimodal trajectories. To isolate style-level controllability from the trajectory-level realism and route-level controllability established during pre-training, only the scoring head is fine-tuned via \method\ while freezing other components. Specifically, \method\ computes group-relative advantages over all candidate rollouts, promoting alignment with user-preferred styles and mitigating covariate shift through RL fine-tuning.
    }
    \label{fig:rift}
    \vspace{-0.8em}
\end{figure}

\subsection{Route-Level Interaction Analysis}
\label{subsec:CBV_identification}
Following~\cite{feng2023d2rl}, we address the ``curse of rarity''~\cite{liu2024curse} by selectively intervening in a set of critical background vehicles (CBVs) at key moments, while keeping non-critical agents under rule-based control for efficiency. CBVs are identified via route-level interaction analysis between the AV's predefined global route and the candidate routes of surrounding vehicles, selecting the vehicle with the highest interaction probability (details in \Cref{ap:CBV_identification}).

The corresponding route-level reference line is then used as a condition for the IL pre-trained planning model (\Cref{subsec:RealisticTrajGeneration}) to synthesize realistic and multimodal trajectories. For each identified CBV, the model generates $N_{\mathrm{ref}} \times N_{\mathrm{lon}}$ candidate trajectories, from which the highest-scoring one is selected for closed-loop execution. This process promotes realistic route-level interactions with the AV and enables the construction of meaningful interactive scenarios.

\subsection{Group-Relative RL Fine-Tuning}
\label{subsec:group_relative_RL}
Open-loop IL pre-training offers trajectory-level realism and route-level controllability; however, it inevitably suffers from covariate shift in closed-loop deployment, causing error accumulation and unrealistic long-term behaviors. Existing RL~\cite{PPO} and hybrid IL–RL methods~\cite{peng2024improving} partially mitigate covariate shift, but their optimization is restricted to the executed rollout, disregarding alternative candidates and degrading multimodality. More critically, covariate shift induces asymmetric degradation across model components: under the generation–selection paradigm, the generation head, conditioned on route-level priors, remains robust and consistently produces realistic multimodal candidates, whereas the scoring head, trained solely through imitation, is more vulnerable to distribution mismatch. These challenges motivate three key requirements for fine-tuning: (i) preserving multimodality, (ii) addressing asymmetric covariate shift, and (iii) ensuring stable policy improvement. We address these requirements through a unified framework that combines group-relative optimization, asymmetry-aware fine-tuning, and dual-clip stabilization.

To preserve multimodality, we adopt group-relative formulation~\cite{shao2024grpo}, which evaluates all candidate modalities within the group and assigns higher relative advantages to those better aligned with user-preferred styles. Considering closed-loop dynamics, we evaluate simulated rollouts rather than raw trajectories to mitigate plan–rollout deviation. Specifically, given $\smash{G=N_{\mathrm{ref}} \times N_{\mathrm{lon}}}$ candidate trajectories $\smash{\mathcal{T} = \{\tau_i\}_{i=1}^G}$ for a CBV at state $s$, we conduct forward simulation~\cite{dauner2023tuplan_garage} (see Appendix~\ref{ap:forward_simulation}) to obtain rollouts $\smash{\widetilde{\mathcal{T}} = \{\widetilde{\tau}_i\}_{i=1}^G}$. Each rollout is evaluated by a user-defined state-wise reward model $\mathrm{StateWiseRM}$, yielding the corresponding discounted returns $\smash{\mathcal{R} = \{R_i\}_{i=1}^G}$ from which we derive the group-relative advantages $\smash{\mathcal{A} = \{\hat{A}_i\}_{i=1}^{G}}$ as follows:
\begingroup
\small
\begin{equation}
\begin{aligned}
\label{eq:group_reward_advantage} 
R_i(s) = \sum_{t=0}^T \gamma^t \left[\mathrm{StateWiseRM}(\widetilde{\tau}_i^t,s)\right],\ \quad \hat A_i(s)=\frac{R_i(s)-\mathrm{mean}(\mathcal{R})}{\sqrt{\mathrm{Var}(\mathcal{R})+\varepsilon}}.
\end{aligned}
\end{equation}
\endgroup
Here, $\hat{A}_i$ quantifies the performance of each rollout relative to the group, promoting high-return rollouts without suppressing alternative modes.

In standard GRPO~\cite{shao2024grpo}, sampling from the old policy implicitly induces old-policy weighting. Extending this to our enumerated setting involves averaging terms weighted by $\pi_{\theta_{\mathrm{old}}}$ in conjunction with the importance ratio $\smash{\rho_i(\theta)=\pi_\theta(\tau_i\!\mid\! s)/\pi_{\theta_{\mathrm{old}}}(\tau_i\!\mid\! s)}$, which yields a low-variance estimate of the old-policy expectation over the enumerated support. The aggregated objective is:
\begingroup
\footnotesize
\begin{equation}
\label{eq:raw_grpo_loss}
\begin{split}
\mathcal{J}(\theta) = \mathbb{E}_{s \sim \mathcal{D}}
\Bigg[\sum_{i=1}^G \pi_{\theta_{\mathrm{old}}}(\tau_{i} | s) \min \left[ \rho_i(\theta) \hat{A}_{i}, \mathrm{clip} \left( \rho_i(\theta), 1 - \epsilon, 1 + \epsilon \right)  \hat{A}_{i} \right]\Bigg] - \beta \mathbb{D}_{KL}\left[\pi_{\theta} || \pi_{ref}\right],
\end{split}
\end{equation}
\endgroup
where $\smash{\pi_{\mathrm{ref}}}$ denotes the IL pre-trained model. While exact over the enumerated support, this scheme overemphasizes frequent modes and under-represents rare but high-return ones, causing mode collapse and reduced diversity. To balance modality contributions, we adopt an equal-weight objective:
\begingroup
\footnotesize
\begin{equation}
\label{eq:grpo_loss}
\begin{split}
\mathcal{J}(\theta) = \mathbb{E}_{s \sim \mathcal{D}}
\Bigg[\frac{1}{G}\sum_{i=1}^G \min \left[ \rho_i(\theta) \hat{A}_{i}, \mathrm{clip} \left( \rho_i(\theta), 1 - \epsilon, 1 + \epsilon \right)  \hat{A}_{i} \right]\Bigg] - \beta \mathbb{D}_{KL}\left[\pi_{\theta} || \pi_{ref}\right] .
\end{split}
\end{equation}
\endgroup
Under equal weighting, $\rho_i(\theta)$ regulates candidate updates rather than serving as a pure importance weight, removing old-policy bias and yielding balanced updates that preserve multimodality.

To address asymmetric covariate shift, we freeze the generation head to retain trajectory-level realism and fine-tune only the scoring head to enhance style-level controllability. In this setting, constraining the scoring head with the KL term to the IL pre-trained model would anchor learning to a biased reference, thereby hindering adaptation. We therefore remove the KL term, allowing the scoring head to adapt freely while leveraging the stable candidates provided by the frozen generation head.

Removing the KL term improves flexibility but raises stability concerns. Although the clipped-ratio mechanism in PPO constrains update magnitude, it proves insufficient in the group-relative setting. Specifically, when a rare trajectory under the old policy receives a higher probability from the current policy despite a negative advantage, the product $\smash{\rho_i(\theta)\hat{A}_i}$ can become disproportionately large and destabilize learning. To address this, we incorporate the dual-clip surrogate from Dual-Clip PPO~\cite{ye2020dual_clip,gao2021learning}, which lower-bounds clipped negative advantages. This establishes a trust-region-like constraint that guarantees bounded per-candidate updates (see \Cref{lem:bounded-rev}), thereby preventing extreme negative shifts while preserving responsiveness to user-preferred styles. The resulting surrogate objective, termed \method, is
\begingroup
\footnotesize
\begin{equation}
\begin{aligned}
\label{eq:rift_loss}
\mathcal{J}_{\mathrm{RIFT}}(\theta)
&=\mathbb{E}_{s\sim\mathcal D}\!\left[
\frac{1}{G}\sum_{i=1}^{G}
\;\psi\big(\rho_i(\theta),\,\hat A_i\big)
\right], \\
\psi(\rho,\hat A) &=
\begin{cases}
\min\!\big(\rho\,\hat A,\ \mathrm{clip}(\rho,1-\epsilon,1+\epsilon)\,\hat A\big), & \hat A\ge 0,\\
\max\!\Big(\min\!\big(\rho\,\hat A,\ \mathrm{clip}(\rho,1-\epsilon,1+\epsilon)\,\hat A\big),\ c\,\hat A\Big), & \hat A<0
\end{cases} \quad (\epsilon>0,\ c>1).
\end{aligned}
\end{equation}
\endgroup
This objective integrates multimodality preservation, asymmetry-aware fine-tuning, and stable optimization into a unified framework, enhancing style-level controllability and mitigating covariate shift while retaining trajectory-level realism and route-level controllability (analysis in~\Cref{ap:theory}).

\section{Experiment}
This section systematically addresses the following research questions:
\textbf{Q1}: How does \method\ compare with representative baselines in terms of the realism and controllability of the generated traffic scenarios? \textbf{Q2}: How can the generated traffic scenario be effectively utilized to support downstream autonomous driving tasks? \textbf{Q3}: How do the components of \method\ contribute to overall performance, and to what extent is style-level controllability preserved under varying user-specified driving styles?

\subsection{Experiment Setups}
Under the dual-stage AV-centric simulation framework, we adopt Pluto~\cite{cheng2024pluto} as our planning model for its well-established performance and open-source implementation. To ensure fair comparison, we use the official IL pre-trained checkpoint provided by Pluto, trained on the nuPlan dataset~\cite{caesar2021nuplan}. Simulations are conducted in CARLA~\cite{carla}, leveraging Bench2Drive~\cite{jia2024bench2drive} to support AV-centric closed-loop simulation and evaluation. Implementation details, training protocols, and evaluation settings are described in \Cref{ap:experiment}.

\begin{table}[t]
% \begin{table}[t!]
\renewcommand{\arraystretch}{1.3}
\centering
\caption{\small \textbf{Comparison in Controllability and Realism.} Metrics are evaluated under the PDM-Lite~\cite{Beibwenger2024pdmLite} AV setting across three random seeds, with the \textbf{best} and the \textcolor{DarkBlue}{\textbf{second-best}} results highlighted accordingly.}
\label{tab:main_results}
\resizebox{1.0\textwidth}{!}{
\begin{threeparttable}
\tablestyle{2.0pt}{1.05}
\setlength{\tabcolsep}{1mm}{
\begin{tabular}{lccccccccc}
    \toprule
     \multirow{2}{*}{\textbf{Method}} & \multirow{3}{*}{\textbf{Type}} & \multicolumn{3}{c}{\textbf{Kinematic Metrics}} & \multicolumn{4}{c}{\textbf{Interaction Metrics}}& \multicolumn{1}{c}{\textbf{Map Metrics}}\\
     \cmidrule(r){3-5}
     \cmidrule(r){6-9}
     \cmidrule(r){10-10}
     & & S-SW $\uparrow$ & S-WD $\downarrow$ & A-SW $\uparrow$ & CPK $\downarrow$ & RP $\uparrow$ & 2D-TTC $\uparrow$ & ACT $\uparrow$ & ORR $\downarrow$ \\
    \midrule
    Pluto & IL & {0.88} \pmsd {0.01} & {5.81} \pmsd {0.06} & {0.90} \pmsd {0.01} & \textcolor{DarkBlue}{\textbf{{5.06}}} \pmsd {2.69} & {564.14} \pmsd {114.41} & {2.50} \pmsd {1.48} & {2.44} \pmsd {1.39} & {0.24} \pmsd {0.15} \\
    PPO & RL & \textcolor{DarkBlue}{\textbf{{0.95}}} \pmsd {0.01} & \textbf{{4.45}} \pmsd {0.15} & {0.89} \pmsd {0.02} & {13.95} \pmsd {2.34} & {409.51} \pmsd {30.38} & {2.59} \pmsd {1.60} & {2.52} \pmsd {1.57} & {9.17} \pmsd {2.39} \\
    FREA & RL & {0.93} \pmsd {0.01} & {5.10} \pmsd {0.14} & \textcolor{DarkBlue}{\textbf{{0.93}}} \pmsd {0.01} & {30.42} \pmsd {5.28} & {292.81} \pmsd {68.54} & \textcolor{DarkBlue}{\textbf{{2.71}}} \pmsd {1.40} & \textcolor{DarkBlue}{\textbf{{2.67}}} \pmsd {1.41} & {9.01} \pmsd {2.09} \\
    FPPO-RS & RL & {0.87} \pmsd {0.01} & {5.80} \pmsd {0.11} & {0.80} \pmsd {0.03} & {21.39} \pmsd {3.23} & {356.79} \pmsd {26.19} & {2.55} \pmsd {1.69} & {2.53} \pmsd {1.68} & {8.60} \pmsd {0.25} \\
    \midrule
    SFT-Pluto & SFT & {0.88} \pmsd {0.02} & {6.01} \pmsd {0.19} & {0.87} \pmsd {0.02} & {6.33} \pmsd {2.23} & {780.48} \pmsd {41.05} & {2.20} \pmsd {1.64} & {2.12} \pmsd {1.51} & \textbf{{0.06}} \pmsd {0.07} \\
    RS-Pluto & SFT+RLFT & {0.93} \pmsd {0.00} & {5.40} \pmsd {0.15} & {0.92} \pmsd {0.01} & \textbf{{4.11}} \pmsd {3.90} & {819.40} \pmsd {74.07} & {2.27} \pmsd {1.45} & {2.23} \pmsd {1.43} & {1.05} \pmsd {0.31} \\
    RTR-Pluto & SFT+RLFT & {0.85} \pmsd {0.00} & {6.24} \pmsd {0.16} & {0.81} \pmsd {0.03} & {6.98} \pmsd {2.59} & {481.60} \pmsd {70.19} & {2.55} \pmsd {1.60} & {2.47} \pmsd {1.51} & {0.08} \pmsd {0.09} \\
    PPO-Pluto & RLFT & \textcolor{DarkBlue}{\textbf{{0.95}}} \pmsd {0.01} & {4.96} \pmsd {0.31} & {0.90} \pmsd {0.02} & {6.89} \pmsd {3.19} & {683.57} \pmsd {38.12} & {2.66} \pmsd {1.50} & {2.60} \pmsd {1.43} & \textcolor{DarkBlue}{\textbf{{0.07}}} \pmsd {0.13} \\
    REINFORCE-Pluto & RLFT & {0.92} \pmsd {0.01} & {5.63} \pmsd {0.19} & {0.90} \pmsd {0.02} & {6.98} \pmsd {0.86} & {813.70} \pmsd {24.76} & {2.39} \pmsd {1.64} & {2.30} \pmsd {1.55} & {1.37} \pmsd {1.13} \\
    GRPO-Pluto & RLFT & {0.94} \pmsd {0.04} & {4.96} \pmsd {0.89} & \textbf{0.96} \pmsd {0.00} & {7.24} \pmsd {4.04} & \textcolor{DarkBlue}{\textbf{{892.65}}} \pmsd {65.27} & {2.65} \pmsd {1.44} & {2.61} \pmsd {1.48} & {0.10} \pmsd {0.08} \\
    \rowcolor{gray!25}RIFT-Pluto (ours) & RLFT & \textbf{{0.97}} \pmsd {0.01} &  \textcolor{DarkBlue}{\textbf{{4.46}}} \pmsd {0.43} &  \textcolor{DarkBlue}{\textbf{{0.93}}} \pmsd {0.01} & {6.83} \pmsd {2.62} & \textbf{{995.33}} \pmsd {84.62} &  \textbf{{2.74}} \pmsd {1.30} & \textbf{{2.71}} \pmsd {1.32} & {0.36} \pmsd {0.20} \\
    \bottomrule
\end{tabular}
}
\end{threeparttable}
}
% \end{table}
\begin{center}
\includegraphics[width=1\textwidth]{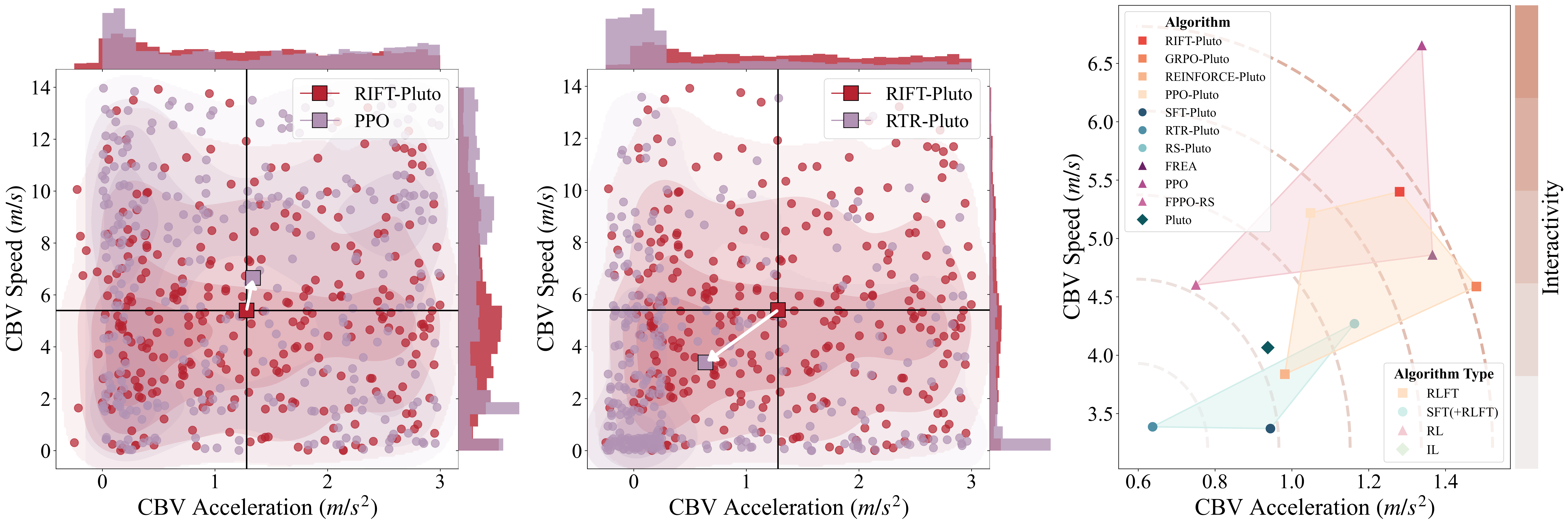}
\end{center}
\captionof{figure}{\small \textbf{Speed and Acceleration Distribution.} RL-based methods tend to be interactive but unnatural, whereas supervised methods are overly conservative. \method\ strikes a balance, yielding higher interactivity with realistic distributional profiles, reducing hesitation while maintaining safe interactions.}
\label{fig:acc-speed}
\includegraphics[width=1\textwidth]{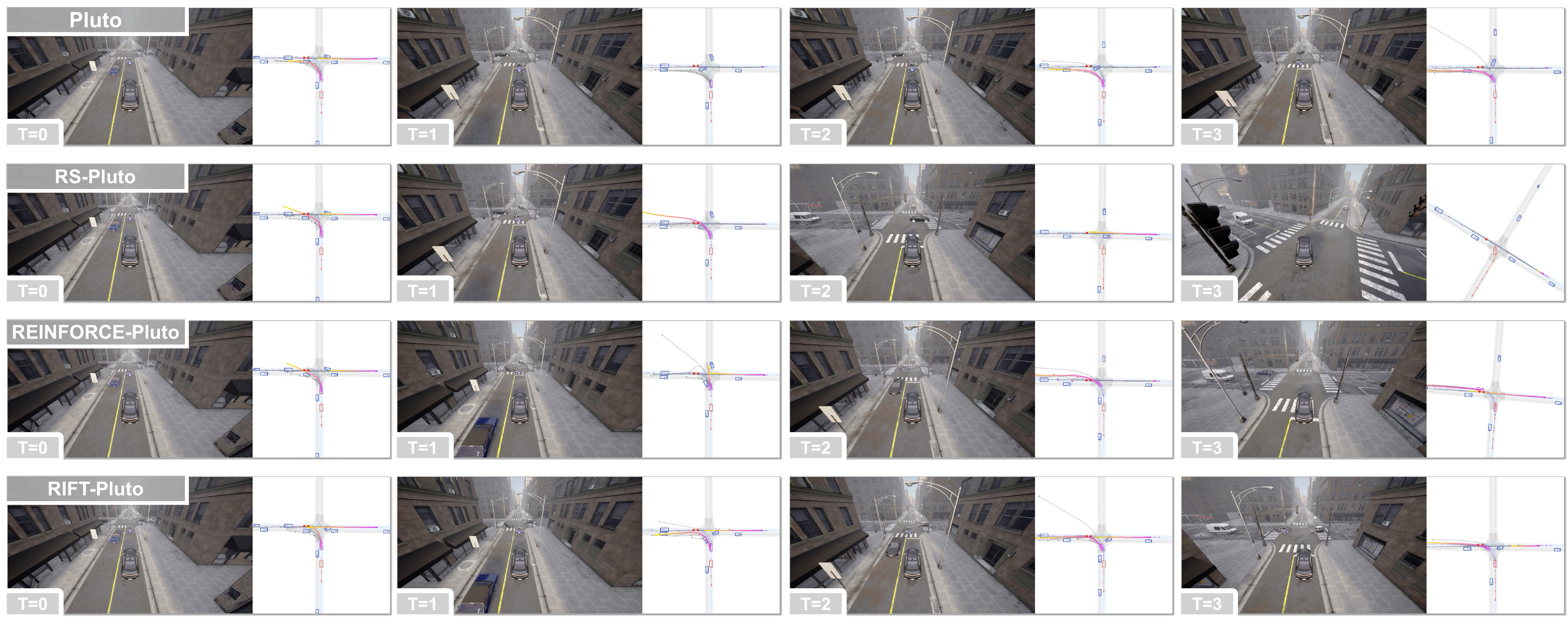}
\vspace{-5pt}
\captionof{figure}{
\small Temporal comparisons illustrating \method's superior performance over other baselines under AV-centric closed-loop simulation. CBV is marked in \highlighttransp{CBV-purple}{purple}, AV (PDM-Lite) is in \highlighttransp{AV-red}{red}, and BVs are in \highlighttransp{BV-blue}{blue}.
}
\vspace{-5pt}
\label{fig:temporal_clip}
\end{table}

\textbf{Baseline.} To systematically evaluate the effectiveness of \method\ in traffic simulation, we compare it against the following baselines, with implementation details provided in \Cref{ap:baseline}.
\begin{itemize}[leftmargin=*]
    \item \textbf{Pure RL/IL}: Methods trained solely with RL or IL, without fine-tuning, including \textit{Pluto}~\cite{cheng2024pluto}, as well as \textit{FREA}, \textit{FPPO-RS}, and \textit{PPO}, all from~\cite{chen2024frea}.
    \item \textbf{RLFT/SFT}: Methods that fine-tune the pre-trained Pluto model using either RL or supervised objectives, including \textit{PPO-Pluto}~\cite{PPO}, \textit{REINFORCE-Pluto}~\cite{sutton1999reinforce}, \textit{GRPO-Pluto}~\cite{shao2024grpo}, and \textit{SFT-Pluto}.
    \item \textbf{Hybrid}: Methods that combine RL and supervised fine-tuning, including \textit{RTR-Pluto}~\cite{zhang2023rtr} and \textit{RS-Pluto}~\cite{peng2024improving}.
\end{itemize}
All methods are fine-tuned on the scoring head to ensure fair comparisons, while isolating style-level controllability from trajectory-level realism and route-level controllability, as confirmed by the ablation studies in \Cref{subsec:albation}. Following the realism standards of the Sim Agent Challenge in WOSAC~\cite{montali2023wosac}, we adopt a normal style reward for all RL-based baselines, with details in \Cref{ap:reward_setting}. Results under an aggressive style reward are reported in \Cref{subsec:albation}.

\textbf{Metrics}. Building on the WOSAC evaluation framework, we categorize our evaluation metrics into three groups: \textit{kinematic metrics}, \textit{interaction metrics}, and \textit{map metrics}. Kinematic metrics capture distributional motion properties (S-SW, S-WD, A-SW), as in~\cite{chen2024review}, with the absence of ground-truth trajectories in CARLA precluding displacement-based measures (e.g., ADE, FDE). Interaction metrics evaluate agent interactions through collision frequency (Collision Per Kilometer, CPK), driving efficiency (Route Progress, RP), and safety-critical measures, including 2D-TTC~\cite{guo20232dttc} and ACT~\cite{venthuruthiyil2022act}. Map metrics evaluate adherence to road geometry through the Off-Road Rate (ORR). Collectively, these metrics comprehensively evaluate realism and controllability in closed-loop simulation; detailed definitions are in \Cref{ap:main_metric}.

\subsection{Realistic and Controllable Traffic Scenario Generation (Q1)}
\label{subsec:main_result}
\textbf{Main Results.} To address \textbf{Q1}, we evaluate the controllability and realism of the generated scenario across CBV methods, with results summarized in \Cref{tab:main_results}. \method\ consistently outperforms all baselines in both aspects across most settings. While supervised learning methods achieve slightly lower CPK and ORR, this improvement is primarily due to their inherently conservative behavior, derived from the expert PDM-Lite~\cite{Beibwenger2024pdmLite}, which prioritizes safety by avoiding risky maneuvers.

This conservative tendency is further highlighted in \Cref{fig:acc-speed}, where supervised policies exhibit significantly lower speed and acceleration profiles. In contrast, \method\ strikes a more favorable balance between safety and interactivity. It achieves superior safety performance, as reflected by higher 2D-TTC and ACT scores, while avoiding the overly cautious behaviors typical of supervised approaches. As shown in \Cref{fig:acc-speed}, \method\ demonstrates higher average speed and acceleration, indicating more interactive behavior, while maintaining realistic motion profiles.

\textbf{Qualitative Results.} To further demonstrate the effectiveness of \method, we compare closed-loop simulations against representative baselines, as shown in \Cref{fig:temporal_clip}. Baseline methods often suffer from unstable or low-quality trajectory selection in closed-loop settings, whereas \method\ consistently selects smooth, high-quality trajectories with superior temporal consistency. Further qualitative examples are presented in \Cref{ap:vis}.

\subsection{Generated Traffic Scenarios for Closed-Loop AV Evaluation (Q2)}
To address \textbf{Q2}, we assess the suitability of traffic scenarios generated by different CBV methods for closed-loop AV evaluation. Following KING~\cite{king}, we adopt PDM-Lite~\cite{Beibwenger2024pdmLite}—a rule-based planner with privileged access—as a reference to evaluate two key scenario properties: feasibility, measured by Driving Score (DS), and naturalness, captured by our proposed Blocked Rate (BR). A high DS indicates that the AV can reliably complete the scenario, while a low BR reflects realistic interactions without excessive obstruction from surrounding vehicles. Together, DS and BR offer a principled basis for evaluating scenario quality.

To further assess the ability of each scenario to reveal weaknesses in learning-based planners, we compare PlanT~\cite{Renz2022PlanT}, UniAD~\cite{hu2023_uniad}, and VAD~\cite{jiang2023vad} with PDM-Lite. As these models are sensitive to subtle or adversarial interactions, informative scenarios should induce noticeable performance drops. As shown in \Cref{tab:ego_testing}, traffic generated by \method\ achieves the highest DS and lowest BR under PDM-Lite, while also causing the largest degradation across all learning-based planners. These results confirm that \method\ generates interactive and feasible scenarios that effectively expose limitations of modern AV systems. See \Cref{ap:AV_eval} for detailed results.
 
\begin{table}[t]
\centering
% \begin{table}[t]
\renewcommand{\arraystretch}{1.3}
\centering
\scriptsize
\tablestyle{2.0pt}{1.05}
\caption{\small \textbf{Comparison of AV Evaluation across CBV Methods.} Each metric is evaluated across three random seeds, with the \textbf{best} and the \textcolor{DarkBlue}{\textbf{second-best}} results highlighted accordingly.}
\label{tab:ego_testing}
\resizebox{1.0\textwidth}{!}{
\begin{threeparttable}
\begin{tabular}{lcccccccc}
\toprule
\multirow{2}{*}{\textbf{Method}} & \multicolumn{2}{c}{\textbf{PDM-Lite}} & \multicolumn{2}{c}{\textbf{PlanT}} & \multicolumn{2}{c}{\textbf{UniAD}} & \multicolumn{2}{c}{\textbf{VAD}}\\
\cmidrule(r){2-3}
\cmidrule(r){4-5}
\cmidrule(r){6-7}
\cmidrule(r){8-9}
& DS $\uparrow$ & BR $\uparrow$ & DS & $\Delta{\text{DS}}\downarrow$ & DS & $\Delta{\text{DS}}\downarrow$ & DS & $\Delta{\text{DS}}\downarrow$\\
\midrule
Pluto & {77.84} \pmsd {2.20}  & {23.33} \pmsd {5.77} & {42.52} \pmsd {4.72} & {-35.32} & {73.73} \pmsd {1.24} & {-4.11} & {66.87} \pmsd {2.11} & {-10.97}\\
PPO & {76.26} \pmsd {0.12} & {30.00} \pmsd {0.00} & {36.39} \pmsd {1.11} & {-39.87} & {69.79} \pmsd {1.41} & {-6.47} & {67.64} \pmsd {1.27} & {-8.62}\\
FREA & {83.53} \pmsd {0.13} & {20.00} \pmsd {0.00} & {39.61} \pmsd {1.34} & {-43.92} & {69.29} \pmsd {5.22} & \textcolor{DarkBlue}{\textbf{{-14.24}}} & {67.57} \pmsd {5.37} & {-15.96}\\
FPPO-RS & {83.52} \pmsd {0.09} & {20.00} \pmsd {0.00} & {38.85} \pmsd {4.91} & {-44.67} & {75.13} \pmsd {5.18} & {-8.39} & {69.15} \pmsd {2.79} & {-14.37}\\
SFT-Pluto & {86.09} \pmsd {2.04} & {13.33} \pmsd {5.77} & {39.41} \pmsd {4.97} & {-47.28} & {77.49} \pmsd {5.93} & {-9.20} & {68.89} \pmsd {0.87} & {-17.80}\\
RS-Pluto & {89.32} \pmsd {1.41} & {13.33} \pmsd {5.77} & {42.05} \pmsd {4.08} & {-47.27} & {80.62} \pmsd {0.78} & {-8.70} & {69.48} \pmsd {5.02} & {-19.84}\\
RTR-Pluto & {87.64} \pmsd {1.56} & {10.00} \pmsd {0.00} & {40.08} \pmsd {2.38} & \textcolor{DarkBlue}{\textbf{{-47.56}}} & {77.69} \pmsd {2.82} & {-9.95} & {66.27} \pmsd {4.53} & {-21.37}\\
PPO-Pluto & {85.63} \pmsd {2.02} & {16.67} \pmsd {5.77} & {41.86} \pmsd {2.78} & {-43.77} & {77.14} \pmsd {3.36} & {-8.49} & {68.62} \pmsd {3.16} & {-17.01}\\
REINFORCE-Pluto & \textcolor{DarkBlue}{\textbf{{92.17}} \pmsd {3.45}} & {10.00} \pmsd {10.00} & {45.25} \pmsd {1.75} & {-46.92} & {79.89} \pmsd {1.97} & {-12.28} & {70.28} \pmsd {3.58} & \textcolor{DarkBlue}{\textbf{{-21.89}}}\\
GRPO-Pluto & {89.86} \pmsd {2.10} & \textcolor{DarkBlue}{\textbf{{6.67}}} \pmsd {5.77} & {47.24} \pmsd {5.67} & {-42.62} & {81.02} \pmsd {0.64} & {-8.84} & {72.55} \pmsd {0.74} & {-17.31}\\
\rowcolor{gray!25}RIFT-Pluto (ours) &  \textbf{{94.78}} \pmsd {1.37} &  \textbf{{0.00}} \pmsd {0.00} &  {44.28} \pmsd {3.15} & \textbf{{-50.50}} & {73.79} \pmsd {6.53} & \textbf{{-20.99}} & {68.24} \pmsd {3.23} & \textbf{{-26.54}}\\

\bottomrule
\end{tabular}
\end{threeparttable}
}
% \end{table}

% \begin{table}[t]
\caption{\small \textbf{Ablation Study on \method.} Evaluation under PDM-Lite AV setting with three random seeds.}
\label{tab:ablation}
\centering
\scriptsize
\tablestyle{2.0pt}{1.05}
\resizebox{1.0\textwidth}{!}{
\setlength{\tabcolsep}{1mm}{
\begin{tabular}{lcccccccc}
\toprule
\multirow{2}{*}{\textbf{Method}} & \multicolumn{3}{c}{\textbf{Kinematic Metrics}} & \multicolumn{4}{c}{\textbf{Interaction Metrics}} & \multicolumn{1}{c}{\textbf{Map Metrics}} \\
\cmidrule(r){2-4}
\cmidrule(r){5-8}
\cmidrule(r){9-9}
& S-SW $\uparrow$ & S-WD $\downarrow$ & A-SW $\uparrow$   & CPK $\downarrow$   & RP $\uparrow$ & 2D-TTC $\uparrow$ & ACT $\uparrow$ & ORR $\downarrow$ \\
\midrule

w/ Old-Weight & 0.82 \tiny\textcolor{DarkBlue}{(-0.15)} & 6.24 \tiny\textcolor{DarkBlue}{(+1.78)} & 0.85 \tiny\textcolor{DarkBlue}{(-0.08)} & 7.51 \tiny\textcolor{DarkBlue}{(+0.68)} & 574.51 \tiny\textcolor{DarkBlue}{(-420.82)} & 2.70 \tiny\textcolor{DarkBlue}{(-0.04)} & 2.68 \tiny\textcolor{DarkBlue}{(-0.03)} & 0.00 \tiny\textcolor{DarkBlue}{(-0.36)} \\
w/ All-Head & 0.96 \tiny\textcolor{DarkBlue}{(-0.01)} & 4.70 \tiny\textcolor{DarkBlue}{(+0.24)} & 0.94 \tiny\textcolor{DarkBlue}{(+0.01)} & 7.84 \tiny\textcolor{DarkBlue}{(+1.01)} & 827.12 \tiny\textcolor{DarkBlue}{(-168.21)} & 2.83 \tiny\textcolor{DarkBlue}{(+0.09)} & 2.76 \tiny\textcolor{DarkBlue}{(+0.05)} & 0.43 \tiny\textcolor{DarkBlue}{(+0.07)} \\
w/ KL & 0.93 \tiny\textcolor{DarkBlue}{(-0.04)} & 5.33 \tiny\textcolor{DarkBlue}{(+0.87)} & 0.90 \tiny\textcolor{DarkBlue}{(-0.03)} & 7.05 \tiny\textcolor{DarkBlue}{(+0.22)} & 815.06 \tiny\textcolor{DarkBlue}{(-180.27)} & 2.76 \tiny\textcolor{DarkBlue}{(+0.02)} & 2.73 \tiny\textcolor{DarkBlue}{(+0.02)} & 0.38 \tiny\textcolor{DarkBlue}{(+0.02)} \\
w/ PPO-Clip & 0.91 \tiny\textcolor{DarkBlue}{(-0.06)} & 5.92 \tiny\textcolor{DarkBlue}{(+1.46)} & 0.94 \tiny\textcolor{DarkBlue}{(+0.01)} & 2.03 \tiny\textcolor{DarkBlue}{(-4.80)} & 655.39 \tiny\textcolor{DarkBlue}{(-339.94)} & 2.57 \tiny\textcolor{DarkBlue}{(-0.17)} & 2.54 \tiny\textcolor{DarkBlue}{(-0.17)} & 0.04 \tiny\textcolor{DarkBlue}{(-0.32)} \\
w/ Aggressive & 0.97 \tiny\textcolor{DarkBlue}{(+0.00)} & 3.89 \tiny\textcolor{DarkBlue}{(-0.57)} & 0.94 \tiny\textcolor{DarkBlue}{(+0.01)} & 8.41 \tiny\textcolor{DarkBlue}{(+1.58)} & 1053.76 \tiny\textcolor{DarkBlue}{(+58.43)} & 2.93 \tiny\textcolor{DarkBlue}{(+0.19)} & 2.88 \tiny\textcolor{DarkBlue}{(+0.17)} & 0.91 \tiny\textcolor{DarkBlue}{(+0.55)} \\
\rowcolor{gray!25}RIFT-Pluto (ours) & 
 0.97 & 
 4.46 & 
 0.93 &
 6.83 & 
 995.33 & 
 2.74 &
 2.71 &
 0.36 
 \\
\bottomrule
\end{tabular}
}
}
% \end{table}
\vspace{-12pt}
\end{table}

\subsection{Ablation Study (Q3)}
\label{subsec:albation}
Building on the design choices introduced in~\Cref{subsec:group_relative_RL}, we systematically ablate five components of \method: weighting scheme (Old-Weight vs. Equal-Weight), fine-tuning module (Scoring Head vs. All Head), KL regularization (w/ KL vs. w/o KL), policy clipping (Dual-Clip vs. PPO-Clip), and style preference (Normal vs. Aggressive). All experiments share identical settings, and results are reported in \Cref{tab:ablation}.

\textbf{Equal-Weight \textit{vs.} Old-Weight.}
Replacing old-policy weighting with equal weighting eliminates the likelihood bias toward frequent modes and enables balanced updates across all candidates. This leads to improved exploitation of high-return rollouts and better multimodality preservation.

\textbf{Scoring Head \textit{vs.} All Head.}
Freezing the generation head is crucial for retaining trajectory-level realism and route-level controllability. Fine-tuning all heads (\textit{w/ All Head}) disrupts the pre-trained generation head and slightly degrades realism metrics, whereas fine-tuning only the scoring head achieves better controllability without compromising realism.

\textbf{w/ KL \textit{vs.} w/o KL.}
Anchoring the scoring head to the IL pre-trained reference via KL regularization (\textit{w/ KL}) constrains adaptation to a biased reference under asymmetric covariate shift. Removing this term improves controllability while maintaining realism, confirming that free adaptation of the scoring head yields more effective policy improvement.

\textbf{Dual-Clip \textit{vs.} PPO-Clip.}
Replacing dual-clip with standard PPO clipping (\textit{w/ PPO-Clip}) results in overly conservative behaviors and reduced efficiency, as extreme negative updates can dominate and suppress positive learning signals. Dual-clip bounds such updates while preserving responsiveness to high-return rollouts, producing more realistic and efficient behavior.

\textbf{Normal \textit{vs.} Aggressive.}
Adopting a more aggressive reward that emphasizes efficiency increases route progress but also raises collision and off-road rates, illustrating the efficiency–safety trade-off. These results demonstrate that \method\ supports flexible style shaping while maintaining stability and multimodality. Additional qualitative insights on controllability are provided in~\Cref{ap:controllability}.

\section{Conclusion}
\label{sec:conclusion}
In this work, we propose a dual-stage AV-centric simulation framework that conducts IL pre-training in a data-driven simulator to capture trajectory-level realism and route-level controllability, followed by RL fine-tuning in a physics-based simulator to address covariate shift and enhance style-level controllability. During fine-tuning, we introduce \method, a novel group-relative RL fine-tuning strategy that evaluates all candidate modalities using the group-relative formulation combined with a surrogate objective for optimization, thereby enhancing style-level controllability and mitigating covariate shift, while preserving the trajectory-level realism and route-level controllability established in IL pre-training. Extensive experiments demonstrate that \method\ generates scenarios with superior realism and controllability, effectively revealing the limitations of modern AV systems and further bridging the gap between traffic simulation and reliable closed-loop evaluation. Due to space limitations, more discussion on limitations and future direction can be found in Appendix~\ref{ap:limitations}.

{\small
\bibliographystyle{unsrt}
\bibliography{cite}
}
%%%%%%%%%%%%%%%%%%%%%%%%%%%%%%%%%%%%%%%%%%%%%%%%%%%%%%%%%%%%

\newpage
\noindent\textbf{\large{Appendix}}

\appendix

\startcontents
{
    \hypersetup{linkcolor=black}
    \printcontents{}{1}{}
}
\newpage

\section{Theoretical Analysis}
\label[appsection]{ap:theory}

\subsection{Setting}
For each $s\!\sim\!\mathcal D$, a frozen trajectory generation head yields $\mathcal C(s)=\{\tau_i\}_{i=1}^G$.
The trajectory score head defines $\pi_\theta(\tau_i\mid s)$ on $\mathcal C(s)$.
Finite-horizon simulation provides returns
\begin{equation}
R_i(s)=\sum_{t=0}^T \gamma^t\,\mathrm{StateWiseRM}(\widetilde{\tau}_i^t,s).
\end{equation}
Uniform (within-group) moments:
\begin{equation}
\mu_{\mathrm{uni}}(s)=\tfrac1G\sum_{j=1}^G R_j(s),\qquad
\sigma^2_{\mathrm{uni}}(s)=\tfrac1G\sum_{j=1}^G (R_j(s)-\mu_{\mathrm{uni}}(s))^2.
\end{equation}
Uniform, centered advantages:
\begin{equation}
\hat A_i(s)=\frac{R_i(s)-\mu_{\mathrm{uni}}(s)}{\sqrt{\sigma_{\mathrm{uni}}^2(s)+\varepsilon}},
\quad \frac1G\sum_{i=1}^G \hat A_i(s)=0.
\end{equation}
Let $\rho_i(\theta)=\pi_\theta(\tau_i\mid s)/\pi_{\theta_{\mathrm{old}}}(\tau_i\mid s)$.
Define the \method\ surrogate
\begin{equation}
\label{eq:rift}
\mathcal J_{\mathrm{RIFT}}(\theta)
=\mathbb E_{s}\!\left[\frac1G\sum_{i=1}^G \psi \big(\rho_i(\theta),\hat A_i(s)\big)\right],
\end{equation}
with dual-clip kernel
\begin{equation}
\psi(\rho,\hat A)=
\begin{cases}
\min\!\big(\rho\,\hat A,~\mathrm{clip}(\rho,1-\epsilon,1+\epsilon)\,\hat A\big), & \hat A\ge 0,\\[3pt]
\max\!\Big(\min\!\big(\rho\,\hat A,~\mathrm{clip}(\rho,1-\epsilon,1+\epsilon)\,\hat A\big),~c\,\hat A\Big), & \hat A<0,
\end{cases}\quad (\epsilon>0,~c>1).
\end{equation}

\paragraph{Assumptions.}
(A) Support floor: there exists $\pi_{\min}>0$ such that $\pi_{\theta_{\mathrm{old}}}(\tau_i\mid s)\ge \pi_{\min}$ for all $(s,i)$, and $\pi_\theta>0\Rightarrow \pi_{\theta_{\mathrm{old}}}>0$ on $\mathcal C(s)$. \
(B) Boundedness: $|\hat A_i(s)|\le A_{\max}$. \
(C) Regularity: $\log\pi_\theta(\tau_i\mid s)$ is $L$-Lipschitz and $C^2$ on compact $\Theta$.

\subsection{Listwise View and Diversity Pressure}
Consider the unclipped uniform surrogate
\begin{equation}
L_{\mathrm{RIFT}}(\theta)=\mathbb E_{s}\!\left[\frac1G \sum_{i=1}^G \rho_i(\theta)\,\hat A_i(s)\right]
=\mathbb E_{s}\!\left[\frac1G \sum_{i=1}^G \frac{\hat A_i(s)}{\pi_{\theta_{\mathrm{old}}}(\tau_i\mid s)}\,\pi_\theta(\tau_i\mid s)\right].
\end{equation}
\begin{proposition}[Pairwise ascent and diversity]
\label{prop:pairwise-rev}
Fix $s$ and shift an infinitesimal mass $\delta$ from $j$ to $i$ in $\pi_\theta(\cdot\mid s)$.
Then $\delta L_{\mathrm{RIFT}}(\theta)=\tfrac{\delta}{G}\!\left(\tfrac{\hat A_i(s)}{\pi_{\theta_{\mathrm{old}}}(\tau_i\mid s)}-\tfrac{\hat A_j(s)}{\pi_{\theta_{\mathrm{old}}}(\tau_j\mid s)}\right)$.
Hence ascent moves mass toward larger $\hat A/\pi_{\mathrm{old}}$, amplifying underrepresented high-quality candidates when $\pi_{\mathrm{old}}$ is peaky.
\end{proposition}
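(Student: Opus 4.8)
The statement is a one-line linearity computation together with its interpretation, so the plan is to (i) recognize the unclipped uniform surrogate as affine in the score-head probabilities at a fixed state, (ii) read off the variation induced by a mass transfer, and (iii) translate the resulting expression into the diversity claim. First I would fix $s$, write $p_k:=\pi_\theta(\tau_k\mid s)$ for $\tau_k\in\mathcal C(s)$, and note that the per-state integrand of $L_{\mathrm{RIFT}}$ is $\ell(s;\theta)=\frac1G\sum_{k=1}^{G}\rho_k(\theta)\hat A_k(s)=\sum_{k=1}^{G}c_k\,p_k$ with fixed coefficients $c_k:=\hat A_k(s)\big/\big(G\,\pi_{\theta_{\mathrm{old}}}(\tau_k\mid s)\big)$. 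Assumption (A) (support floor $\pi_{\theta_{\mathrm{old}}}\ge\pi_{\min}>0$) together with Assumption (B) makes the $c_k$ well defined and bounded. The key structural point to emphasize is that $\ell(s;\cdot)$ depends on $\theta$ only through $p=(p_1,\dots,p_G)$ on the simplex, and does so \emph{affinely}; $\theta_{\mathrm{old}}$ enters merely as fixed weights.

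Next I would implement the mass transfer as a feasible simplex direction, $p_i\mapsto p_i+\delta$, $p_j\mapsto p_j-\delta$, all other coordinates unchanged, which preserves $\sum_k p_k=1$ and remains in the simplex for $|\delta|$ small. Because $\ell(s;\cdot)$ is affine in $p$, the induced change is \emph{exact} rather than merely first order, and equals $\delta\ell(s;\theta)=(c_i-c_j)\,\delta=\frac{\delta}{G}\big(\hat A_i(s)/\pi_{\theta_{\mathrm{old}}}(\tau_i\mid s)-\hat A_j(s)/\pi_{\theta_{\mathrm{old}}}(\tau_j\mid s)\big)$, which is the claimed identity; the outer expectation $\mathbb E_s$ only attaches the measure weight of this $s$-slice, so the per-state identity is the substantive content (equivalently one reads $\delta L_{\mathrm{RIFT}}$ as the directional derivative of $L_{\mathrm{RIFT}}$ along the lifted direction). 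I would add a one-sentence remark that when the score head is a softmax a genuine parameter move $\theta\mapsto\theta+\eta v$ induces, to first order, exactly such a coordinate reallocation, so the probability-space statement transfers to parameter space under the model's local expressiveness; stated at the level of probabilities it needs nothing beyond (A).

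Finally I would extract the diversity consequence: an ascent step requires $\delta\ell>0$, hence moves mass from $j$ to $i$ precisely when $c_i>c_j$, i.e. when $\hat A_i/\pi_{\theta_{\mathrm{old}}}(\tau_i\mid s)>\hat A_j/\pi_{\theta_{\mathrm{old}}}(\tau_j\mid s)$. The effective score driving updates is therefore the advantage reweighted by inverse old-policy mass, so when $\pi_{\theta_{\mathrm{old}}}$ is peaky a candidate with non-negative advantage that is underrepresented (small $\pi_{\theta_{\mathrm{old}}}$) receives an amplified effective score and attracts probability mass — the stated amplification of underrepresented high-quality candidates. I would close by contrasting with the old-policy-weighted surrogate \eqref{eq:raw_grpo_loss}, where the explicit $\pi_{\theta_{\mathrm{old}}}$ factor cancels the $1/\pi_{\theta_{\mathrm{old}}}$ inside $\rho_i$, reducing the pairwise comparison to raw $\hat A_i$ versus $\hat A_j$ and erasing this effect — precisely the motivation for equal weighting.

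\textbf{Main obstacle.} There is no analytic difficulty here; the care required is purely expository. The two points to nail down are making precise what $\delta L_{\mathrm{RIFT}}$ denotes given the outer expectation over $s$ (resolved by phrasing it as a per-state variation, equivalently a directional derivative along a feasible simplex direction), and being explicit that the clean probability-space identity lifts to the parametrized family only up to the score head's local expressiveness. Everything else is the affine bookkeeping above, and the result should be read as a structural observation about the $\hat A/\pi_{\mathrm{old}}$ reweighting rather than a deep estimate.
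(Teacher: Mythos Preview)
Your proposal is correct; the paper states Proposition~\ref{prop:pairwise-rev} without proof, and your argument---writing the per-state integrand as $\sum_k c_k p_k$ with $c_k=\hat A_k(s)/(G\,\pi_{\theta_{\mathrm{old}}}(\tau_k\mid s))$ and reading off the exact change under a simplex mass transfer---is precisely the intended one-line affine computation. Your additional remarks (on lifting to parameter space via the softmax, and the contrast with the old-policy-weighted surrogate where the $1/\pi_{\theta_{\mathrm{old}}}$ factor cancels) go beyond what the paper records but are accurate and sharpen the exposition.
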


\begin{corollary}[Top-1 Fisher consistency under uniform reference]
If $\pi_{\theta_{\mathrm{old}}}$ is uniform on $\mathcal C(s)$ and $i^\star(s)=\arg\max_i \hat A_i(s)$ is unique, any global maximizer of $L_{\mathrm{RIFT}}$ concentrates $\pi_\theta(\cdot\mid s)$ on $i^\star(s)$.
\end{corollary}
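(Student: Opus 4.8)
The plan is to reduce the corollary to a linear program over the probability simplex that decouples across states $s$. First I would specialize the surrogate to the uniform reference. With $\pi_{\theta_{\mathrm{old}}}(\tau_i\mid s)=1/G$ one has $\rho_i(\theta)=G\,\pi_\theta(\tau_i\mid s)$, so the inner sum in $L_{\mathrm{RIFT}}$ collapses to
\begin{equation}
\frac1G\sum_{i=1}^{G}\rho_i(\theta)\,\hat A_i(s)=\sum_{i=1}^{G}\pi_\theta(\tau_i\mid s)\,\hat A_i(s)=\mathbb{E}_{\tau\sim\pi_\theta(\cdot\mid s)}\big[\hat A(\tau,s)\big],
\end{equation}
and therefore $L_{\mathrm{RIFT}}(\theta)=\mathbb{E}_{s\sim\mathcal D}\,\mathbb{E}_{\tau\sim\pi_\theta(\cdot\mid s)}[\hat A(\tau,s)]$. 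The old-policy normalizer appearing in Proposition~\ref{prop:pairwise-rev} is exactly what cancels the $1/G$ prefactor here, and this identity is the only place the uniformity hypothesis is used.

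Next I would optimize pointwise in $s$. For fixed $s$, $\sum_i \pi_\theta(\tau_i\mid s)\,\hat A_i(s)$ is a linear functional of $\pi_\theta(\cdot\mid s)\in\Delta^{G-1}$, hence maximized at a vertex with value $\max_i\hat A_i(s)=\hat A_{i^\star(s)}(s)$; uniqueness of $i^\star(s)=\arg\max_i\hat A_i(s)$ forces the maximizer to be the single vertex $\delta_{i^\star(s)}$. Integrating gives $L_{\mathrm{RIFT}}(\theta)\le\mathbb{E}_s[\hat A_{i^\star(s)}(s)]$, and the measurable selection $s\mapsto\delta_{i^\star(s)}$ (well-defined under mild measurability of $R_i(\cdot)$) attains the bound, so it is the value of the problem; hence every global maximizer must satisfy $\pi_\theta(\cdot\mid s)=\delta_{i^\star(s)}$ for $\mathcal D$-a.e.\ $s$. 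An equivalent and arguably cleaner route uses Proposition~\ref{prop:pairwise-rev} directly: under uniform $\pi_{\theta_{\mathrm{old}}}$ it reads $\delta L_{\mathrm{RIFT}}=\delta\,(\hat A_i(s)-\hat A_j(s))$, so from any $\pi_\theta(\cdot\mid s)$ that places positive mass on some $j\neq i^\star(s)$ one can strictly increase $L_{\mathrm{RIFT}}$ by moving that mass to $i^\star(s)$; thus no non-vertex and no wrong vertex can be optimal. It is also worth recording that uniqueness of the $\arg\max$ together with $\sum_i\hat A_i(s)=0$ implies $\hat A_{i^\star(s)}(s)>0$, so the concentration strictly improves the objective over the uniform reference.

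The main obstacle is purely how ``global maximizer'' is read against the model class: a softmax score head parametrized over a compact $\Theta$ can approach but not equal a simplex vertex, so the supremum of $L_{\mathrm{RIFT}}$ need not be attained inside $\Theta$. I would handle this by phrasing the result at the population (nonparametric) level — maximizing over all measurable conditional laws $s\mapsto\pi(\cdot\mid s)\in\Delta^{G-1}$, the standard setting for Fisher-consistency statements — and then adding a remark that in the parametric regime the same estimate shows every maximizing sequence $\theta_k$ has $\pi_{\theta_k}(\cdot\mid s)\to\delta_{i^\star(s)}$ for $\mathcal D$-a.e.\ $s$: the nonnegative per-state gap obeys $\hat A_{i^\star(s)}(s)-\sum_i\pi_{\theta_k}(\tau_i\mid s)\hat A_i(s)\ \ge\ \big(\hat A_{i^\star(s)}(s)-\max_{j\neq i^\star(s)}\hat A_j(s)\big)\big(1-\pi_{\theta_k}(\tau_{i^\star(s)}\mid s)\big)$, and since its expectation vanishes along a maximizing sequence while the leading factor is a.e.\ positive, $\pi_{\theta_k}(\tau_{i^\star(s)}\mid s)\to1$ a.e. Everything else — the collapse of $\rho_i$, the linear-programming step, and the measurability bookkeeping — is routine.
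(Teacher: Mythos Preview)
Your argument is correct and aligns with the paper's intent: the paper actually gives no proof for this corollary, treating it as an immediate consequence of Proposition~\ref{prop:pairwise-rev}, and both of your routes---the direct linear-program reduction over $\Delta^{G-1}$ and the pairwise-ascent argument---deliver exactly that. Your added discussion of measurability and of the parametric attainability caveat (softmax heads only approach vertices) goes beyond what the paper provides and is a welcome clarification.
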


\subsection{Clipping as Stability Control}
Clipping is a pointwise pessimistic transform: for any $x=\rho\,\hat A$,
\[
\min(\rho\,\hat A,\mathrm{clip}(\rho)\hat A)\le \rho\,\hat A\ .
\]
Summed over mixed signs, there is no global monotone lower bound for $L_{\mathrm{RIFT}}$; instead, clipping serves to bound the value and the gradient.

\begin{lemma}[Bounded values and gradients]
\label{lem:bounded-rev}
If $|\hat A|\le A_{\max}$, then for all $(s,i)$: \
(i) Value bounds: $\psi\!\in[0,(1+\epsilon)\hat A]$ for $\hat A\!\ge\!0$, and $\psi\!\in[c\,\hat A,0]$ for $\hat A\!<\!0$. \
(ii) Gradient bounds:
\[
\bigg|\frac{\partial \psi}{\partial \log\pi_\theta}\bigg|\ \le\
\begin{cases}
(1+\epsilon)|\hat A|, & \hat A\ge 0,\\
c\,|\hat A|, & \hat A<0,
\end{cases}
\]
and on the negative branch when the dual-clip is active ($\psi=c\,\hat A$) the derivative is $0$.
\end{lemma}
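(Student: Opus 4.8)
The plan is to regard $\psi(\rho,\hat A)$, for each fixed sign of $\hat A$, as a piecewise-linear function of the single scalar $\rho=\rho_i(\theta)>0$ (positivity by Assumption (A)), collapse the nested $\min$/$\max$ into an explicit one-dimensional envelope in $\rho$, and then read off the value range and the per-piece slope directly. The gradient claim then follows because $\rho_i=\exp\!\big(\log\pi_\theta(\tau_i\mid s)-\log\pi_{\theta_{\mathrm{old}}}(\tau_i\mid s)\big)$ with $\pi_{\theta_{\mathrm{old}}}$ held fixed, so $\partial\rho_i/\partial\log\pi_\theta=\rho_i$ and the chain rule converts slopes in $\rho$ into the stated bounds, with $|\hat A|\le A_{\max}$ absorbed at the end.

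The first step is to simplify the kernel. For $\hat A\ge 0$, factoring $\hat A$ out of the inner $\min$ gives $\psi=\hat A\,\min\!\big(\rho,\mathrm{clip}(\rho,1-\epsilon,1+\epsilon)\big)$, and a three-way split of $\rho$ across the thresholds $1-\epsilon,1+\epsilon$ shows the lower clip is never binding, so $\psi=\hat A\,\min(\rho,1+\epsilon)$. For $\hat A<0$, pulling the negative $\hat A$ through flips $\min$ into $\max$, so the inner term equals $\hat A\,\max(\rho,1-\epsilon)$ and $\psi=\max\!\big(\hat A\,\max(\rho,1-\epsilon),\,c\hat A\big)$; splitting $\rho$ across $1-\epsilon$ and $c$ (using $1-\epsilon<1<c$) reduces $\psi$ to the plateau $\hat A(1-\epsilon)$ on $\rho\le 1-\epsilon$, the unclipped piece $\hat A\rho$ on $1-\epsilon\le\rho\le c$, and the dual-clip plateau $c\hat A$ on $\rho\ge c$. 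With these normal forms, part (i) is immediate: for $\hat A\ge 0$ we have $\min(\rho,1+\epsilon)\in(0,1+\epsilon]$, hence $\psi\in[0,(1+\epsilon)\hat A]$; for $\hat A<0$, since $\rho>0$ both $\max$-arguments $\hat A\max(\rho,1-\epsilon)$ and $c\hat A$ are nonpositive so $\psi\le 0$, while $\psi\ge c\hat A$ holds by the outer $\max$, giving $\psi\in[c\hat A,0]$.

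For part (ii), on each linear piece identified above the $\rho$-derivative is either $0$ (on the clipped or dual-clipped plateaus) or $\hat A$ (on the single unclipped piece, where $\rho\le 1+\epsilon$ when $\hat A\ge 0$ and $\rho\le c$ when $\hat A<0$), so $\partial\psi/\partial\log\pi_\theta\in\{0,\ \rho\hat A\}$ with $\rho$ bounded by $1+\epsilon$ (resp.\ $c$); hence $|\partial\psi/\partial\log\pi_\theta|\le(1+\epsilon)|\hat A|$ (resp.\ $c|\hat A|$), and on the dual-clip region $\rho\ge c$ where $\psi=c\hat A$ the derivative is exactly $0$. Substituting $|\hat A|\le A_{\max}$ yields the uniform constants $(1+\epsilon)A_{\max}$ and $cA_{\max}$.

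The only real subtlety — and the step I expect to require the most care — is the non-smoothness of $\psi$ at the breakpoints $\rho\in\{1-\epsilon,1+\epsilon,c\}$: there I would phrase the gradient claim via one-sided derivatives (equivalently, the Clarke subdifferential), whose elements are convex combinations of the two adjacent per-piece slopes and therefore already satisfy the same bounds, so no new argument is needed. Everything else — the sign bookkeeping when moving $\hat A$ through the $\min$/$\max$, and the exhaustiveness of the case splits (which uses $\rho>0$ from Assumption (A)) — is routine.
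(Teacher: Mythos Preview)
Your argument is correct. The paper states this lemma without proof, so there is no reference argument to compare against; your piecewise-linear collapse of the kernel into $\psi=\hat A\min(\rho,1+\epsilon)$ for $\hat A\ge 0$ and the three-piece form $\hat A(1-\epsilon)$, $\hat A\rho$, $c\hat A$ for $\hat A<0$ is exactly the natural route and all the bookkeeping checks out, including the chain-rule step $\partial\rho/\partial\log\pi_\theta=\rho$ and the Clarke-subdifferential treatment at the breakpoints.
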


\subsection{Smoothness w.r.t. Policy Divergence}
Write $w_i(s)=\hat A_i(s)/\pi_{\theta_{\mathrm{old}}}(\tau_i\mid s)$ and note $|w_i|\le A_{\max}/\pi_{\min}$ under Assumption~A with $\pi_{\min}=\inf_{s,i}\pi_{\theta_{\mathrm{old}}}(i\mid s)>0$ (label-smoothing in practice).
Then the unclipped surrogate is linear in $\pi_\theta$:
\[
L_{\mathrm{RIFT}}(\theta)-L_{\mathrm{RIFT}}(\theta')
=\mathbb E_{s}\!\left[\frac1G\sum_i w_i(s)\big(\pi_\theta(i\mid s)-\pi_{\theta'}(i\mid s)\big)\right].
\]
\begin{lemma}[Lipschitz continuity via KL]
\label{lem:lipschitz}
For any $\theta,\theta'$,
\[
\big|L_{\mathrm{RIFT}}(\theta)-L_{\mathrm{RIFT}}(\theta')\big|
\ \le\ \frac{A_{\max}}{\pi_{\min}}\ \sqrt{2\,\mathbb E_{s}\!\big[\mathrm{KL}\big(\pi_\theta(\cdot\mid s)\,\Vert\,\pi_{\theta'}(\cdot\mid s)\big)\big]}\,.
\]
\end{lemma}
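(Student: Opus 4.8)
The plan is to start from the linearity identity already recorded in the excerpt,
\[
L_{\mathrm{RIFT}}(\theta)-L_{\mathrm{RIFT}}(\theta')
=\mathbb E_{s}\!\left[\frac1G\sum_i w_i(s)\big(\pi_\theta(i\mid s)-\pi_{\theta'}(i\mid s)\big)\right],
\]
with $w_i(s)=\hat A_i(s)/\pi_{\theta_{\mathrm{old}}}(\tau_i\mid s)$. First I would pass the absolute value through the outer expectation and the inner sum via the triangle inequality, and bound each $|w_i(s)|$ by $A_{\max}/\pi_{\min}$, using boundedness $|\hat A_i(s)|\le A_{\max}$ (Assumption B) together with the support floor $\pi_{\theta_{\mathrm{old}}}(\tau_i\mid s)\ge\pi_{\min}$ (Assumption A). What remains inside the expectation is then $\frac1G\sum_i|\pi_\theta(i\mid s)-\pi_{\theta'}(i\mid s)|=\frac1G\|\pi_\theta(\cdot\mid s)-\pi_{\theta'}(\cdot\mid s)\|_1$, the $L^1$ distance between two categorical distributions on $\mathcal C(s)$.

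Next I would invoke Pinsker's inequality in its $L^1$ form, $\|p-q\|_1\le\sqrt{2\,\mathrm{KL}(p\Vert q)}$, applied pointwise in $s$, to replace the $L^1$ term by $\sqrt{2\,\mathrm{KL}(\pi_\theta(\cdot\mid s)\Vert\pi_{\theta'}(\cdot\mid s))}$ under the expectation. Since $x\mapsto\sqrt{x}$ is concave, Jensen's inequality then lets me pull $\mathbb E_s$ inside the square root, yielding $\sqrt{2\,\mathbb E_s[\mathrm{KL}(\pi_\theta(\cdot\mid s)\Vert\pi_{\theta'}(\cdot\mid s))]}$. Finally, discarding the factor $1/G\le 1$ leaves exactly the claimed prefactor $A_{\max}/\pi_{\min}$, completing the bound.

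There is essentially no hard step here: the statement is a direct chain of the triangle inequality, a uniform $L^\infty$ bound on the weights $w_i$, Pinsker, and Jensen. The only points requiring a word of care are (i) checking that $\pi_{\min}:=\inf_{s,i}\pi_{\theta_{\mathrm{old}}}(i\mid s)>0$ is legitimately available (Assumption A, realized via label smoothing in practice), so that the $w_i$ are uniformly bounded and $L_{\mathrm{RIFT}}$ is a well-defined linear functional of $\pi_\theta$; and (ii) noting that the displayed inequality is stated for the \emph{unclipped} surrogate $L_{\mathrm{RIFT}}$, so the dual-clip kernel $\psi$ plays no role—if one wanted the analogous estimate for $\mathcal J_{\mathrm{RIFT}}$, one would additionally use that $\rho\mapsto\psi(\rho,\hat A)$ is $|\hat A|$-Lipschitz by Lemma~\ref{lem:bounded-rev}, but that refinement is not needed for the present claim.
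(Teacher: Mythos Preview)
Your proposal is correct and matches the paper's own proof, which is the one-line ``By H\"older and Pinsker: $|\sum_i w_i\Delta\pi|\le \|w\|_\infty\,\|\Delta\pi\|_1 \le (A_{\max}/\pi_{\min})\sqrt{2\,\mathrm{KL}(\pi_\theta\Vert\pi_{\theta'})}$, then average over $s$.'' Your triangle-inequality-plus-uniform-bound step \emph{is} the $\ell^\infty$--$\ell^1$ H\"older inequality, and you make explicit the Jensen step and the dropped $1/G$ factor that the paper's ``average over $s$'' leaves implicit.
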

\begin{proof}
By Hölder and Pinsker: $|\sum_i w_i\Delta\pi|\le \|w\|_\infty\,\|\Delta\pi\|_1 \le (A_{\max}/\pi_{\min})\sqrt{2\,\mathrm{KL}(\pi_\theta\Vert\pi_{\theta'})}$, then average over $s$.
\end{proof}
\begin{lemma}[Lipschitz continuity of clipped surrogate]
\label{lem:lipschitz-clipped}
Because $\partial \psi / \partial \pi_\theta(i|s)$ is bounded by $A_{\max}/\pi_{\min}$ whenever the active branch is differentiable, 
\[
|\mathcal J_{\mathrm{RIFT}}(\theta)-\mathcal J_{\mathrm{RIFT}}(\theta')|
\ \le\ \frac{A_{\max}}{\pi_{\min}}\ \sqrt{2\,\mathbb E_{s}\!\big[\mathrm{KL}\big(\pi_\theta(\cdot\mid s)\,\Vert\,\pi_{\theta'}(\cdot\mid s)\big)\big]}.
\]
\end{lemma}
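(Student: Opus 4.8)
The plan is to mirror the proof of Lemma~\ref{lem:lipschitz}, replacing the exactly linear kernel $\rho\mapsto\rho\,\hat A$ by the piecewise-linear dual-clip kernel $\psi$; the key observation is that clipping and dual-clipping only ever \emph{flatten} the kernel (they replace a slope $\hat A$ by $0$ on the clipped regions), so they cannot increase any local Lipschitz constant, and the bound $A_{\max}/\pi_{\min}$ of Lemma~\ref{lem:lipschitz} survives intact.

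First I would fix $s$ and $i$ and note that, because the generation head is frozen, $\hat A_i(s)$ and $\pi_{\theta_{\mathrm{old}}}(\tau_i\mid s)$ do not depend on the score-head parameters, so $\theta\mapsto\psi(\rho_i(\theta),\hat A_i(s))$ is a function of the single scalar $\pi_\theta(\tau_i\mid s)$ through $\rho_i=\pi_\theta(\tau_i\mid s)/\pi_{\theta_{\mathrm{old}}}(\tau_i\mid s)$. Built from $\min$, $\max$, $\mathrm{clip}$ and affine maps, this scalar map is continuous and piecewise affine with finitely many breakpoints (among $\rho_i\in\{1-\epsilon,\,1+\epsilon,\,c\}$), and on each affine piece its slope in $\rho_i$ is either $0$ or $\hat A_i$; hence its slope in $\pi_\theta(\tau_i\mid s)$ has magnitude at most $|\hat A_i|/\pi_{\theta_{\mathrm{old}}}(\tau_i\mid s)\le A_{\max}/\pi_{\min}$ by Assumptions~(A)--(B) — this is exactly the branchwise bound quoted in the statement. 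Since a continuous piecewise-affine function whose derivative is bounded a.e.\ by $L$ is globally $L$-Lipschitz (integrate the derivative across the breakpoints), I obtain $|\psi(\rho_i(\theta),\hat A_i(s))-\psi(\rho_i(\theta'),\hat A_i(s))|\le(A_{\max}/\pi_{\min})\,|\pi_\theta(\tau_i\mid s)-\pi_{\theta'}(\tau_i\mid s)|$ for all $\theta,\theta'$.

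Then I would sum this over $i=1,\dots,G$, divide by $G$ and discard the harmless factor $1/G\le 1$, so the inner average is controlled by $(A_{\max}/\pi_{\min})\,\|\pi_\theta(\cdot\mid s)-\pi_{\theta'}(\cdot\mid s)\|_1$; take $\mathbb E_s$ and push it inside the absolute value; apply Pinsker's inequality to replace the $\ell^1$ distance by $\sqrt{2\,\mathrm{KL}(\pi_\theta(\cdot\mid s)\Vert\pi_{\theta'}(\cdot\mid s))}$; and finish with Jensen and the concavity of $\sqrt{\cdot}$ to pull $\mathbb E_s$ inside the square root — the same last three moves as in Lemma~\ref{lem:lipschitz}. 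The only step needing genuine care is upgrading the branchwise derivative bound to a \emph{global} Lipschitz bound on $\psi$: this is legitimate precisely because $\psi$ is \emph{continuous} at its finitely many breakpoints (it is built from continuous operations), so there are no jumps to defeat the fundamental theorem of calculus; one should also record that $\psi_i$ depends on $\pi_\theta$ only through its $i$-th coordinate, which is what licenses the coordinatewise-to-$\ell^1$ passage. Everything else is a verbatim repetition of the unclipped argument, so no further obstacle arises.
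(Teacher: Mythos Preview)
Your proposal is correct and follows exactly the approach the paper indicates: the paper does not give a separate proof of Lemma~\ref{lem:lipschitz-clipped} but embeds the argument in the statement itself (bounded branchwise derivative $\partial\psi/\partial\pi_\theta(i\mid s)\le A_{\max}/\pi_{\min}$, then the same H\"older--Pinsker--Jensen chain as in Lemma~\ref{lem:lipschitz}). Your careful check that $\psi$ is continuous and piecewise affine with slopes in $\{0,\hat A_i\}$, so that the branchwise bound upgrades to a global Lipschitz constant, is precisely the detail the paper leaves implicit.
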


\subsection{Variance and Enumeration}
Let $f_i(s;\theta)=\psi(\rho_i(\theta),\hat A_i(s))$.
Exact enumeration yields $\operatorname{Var}\!\big(\tfrac1G\sum_i f_i\,\big|\,s\big)=0$
(assuming $\widetilde{\tau}_i$ and their evaluations are fixed during the update; 
otherwise, environment randomness still induces nonzero variance), 
while sampling $i$ i.i.d.\ within the group gives conditional variance $\operatorname{Var}(f_i\mid s)/N$ for $N$ samples.

\subsection{Convergence of Stochastic Ascent}
\begin{theorem}[Convergence to a stationary point]
\label{thm:convergence-rev}
Under Assumptions A--C, with step sizes $\eta_k>0$, $\sum_k\eta_k=\infty$, $\sum_k\eta_k^2<\infty$, 
and unbiased bounded-variance stochastic subgradients, 
the iterates of stochastic subgradient ascent on $\mathcal J_{\mathrm{RIFT}}$ satisfy
\[
\liminf_{k\to\infty}\mathbb E\!\big[\operatorname{dist}\big(0,\partial^C \mathcal J_{\mathrm{RIFT}}(\theta_k)\big)\big]=0,
\]
where $\partial^C$ denotes the Clarke generalized gradient.
\end{theorem}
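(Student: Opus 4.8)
The plan is to treat the ascent update as a Robbins--Monro stochastic approximation of the differential inclusion $\dot\theta\in\partial^C\mathcal J_{\mathrm{RIFT}}(\theta)$ and to check the hypotheses under which the tame nonsmooth stochastic-approximation machinery (Benaim--Hofbauer--Sorin, with the tame refinements of Davis--Drusvyatskiy--Kakade--Lee and Bolte--Pauwels) yields convergence to Clarke-stationary points. Three ingredients are needed: (i) $\mathcal J_{\mathrm{RIFT}}$ is locally Lipschitz on the compact set $\Theta$, so $\partial^C\mathcal J_{\mathrm{RIFT}}$ is a well-defined, nonempty, compact, convex, upper-semicontinuous set-valued map and the oracle outputs lie in it; (ii) $\mathcal J_{\mathrm{RIFT}}$ is path-differentiable, so it is a strict Lyapunov function for the inclusion away from stationarity; (iii) the oracle is unbiased with bounded variance and the step sizes satisfy $\sum_k\eta_k=\infty$, $\sum_k\eta_k^2<\infty$, which are exactly the stated hypotheses. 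Granting (i)--(iii): the interpolated trajectory of $\{\theta_k\}$ is an asymptotic pseudotrajectory of the inclusion, the values $\mathcal J_{\mathrm{RIFT}}(\theta_k)$ converge almost surely, the limit set of $\{\theta_k\}$ is internally chain transitive and lies in a level set of $\mathcal J_{\mathrm{RIFT}}$, and by path-differentiability any such set consists of Clarke-stationary points; hence $\operatorname{dist}(0,\partial^C\mathcal J_{\mathrm{RIFT}}(\theta_k))\to0$ along a subsequence a.s. Since these distances are uniformly bounded by the Lipschitz constant of $\mathcal J_{\mathrm{RIFT}}$ on $\Theta$, dominated convergence then gives $\liminf_k\mathbb E[\operatorname{dist}(0,\partial^C\mathcal J_{\mathrm{RIFT}}(\theta_k))]=0$.

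\textbf{Verifying (i)--(iii).} For (i), I would invoke Lemma~\ref{lem:lipschitz-clipped}, $|\mathcal J_{\mathrm{RIFT}}(\theta)-\mathcal J_{\mathrm{RIFT}}(\theta')|\le(A_{\max}/\pi_{\min})\sqrt{2\,\mathbb E_s[\mathrm{KL}(\pi_\theta\Vert\pi_{\theta'})]}$, and then use Assumption~C ($\log\pi_\theta$ is $C^2$ on compact $\Theta$) to bound $\mathbb E_s[\mathrm{KL}(\pi_\theta\Vert\pi_{\theta'})]=O(\|\theta-\theta'\|^2)$ locally, so $\mathcal J_{\mathrm{RIFT}}$ is Lipschitz on $\Theta$ and $\partial^C\mathcal J_{\mathrm{RIFT}}$ is as claimed. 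By the measurable-selection/subdifferential-interchange rule for locally Lipschitz integrands, $\partial^C\mathcal J_{\mathrm{RIFT}}(\theta)\subseteq\mathbb E_s\big[\tfrac1G\sum_i\partial^C_\theta\psi(\rho_i(\theta),\hat A_i(s))\big]$; away from the Lebesgue-null kink locus of the piecewise-linear kernel $\psi$ the integrand is differentiable, so the practical oracle (which returns that a.e.\ gradient) is an unbiased selection from $\partial^C\mathcal J_{\mathrm{RIFT}}$, with second moment bounded using Lemma~\ref{lem:bounded-rev}(ii), Assumption~B, and the $L$-Lipschitz bound on $\nabla\log\pi_\theta$ from Assumption~C, giving (iii). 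For (ii), I would argue definability: $\psi$ is built from $\min$, $\max$, clipping, and scalar multiplication (all semialgebraic), and under the softmax parameterization of the scoring head with the analytic dependence on $\theta$ of Assumption~C each $\rho_i(\theta)$ is definable in an o-minimal structure; since $\mathcal D$ is an empirical distribution, the expectation is a finite sum and definability is preserved, so $\mathcal J_{\mathrm{RIFT}}$ is definable, and definable locally Lipschitz functions are path-differentiable.

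\textbf{Main obstacle.} The substantive step is (ii): path-differentiability is exactly the property that rules out the limiting oscillations around a kink that can cause nonsmooth stochastic subgradient methods to fail, and it hinges on the policy class $\pi_\theta$ being definable in an o-minimal structure --- harmless for the architectures used here, but worth isolating as an explicit structural assumption rather than leaving implicit. A secondary technical point is the interchange $\partial^C\mathbb E_s[\cdot]\subseteq\mathbb E_s[\partial^C_\theta(\cdot)]$ and the verification that the a.e.\ gradient returned by the practical oracle is a genuine unbiased element of the Clarke subdifferential; this follows once one notes that the kink locus has measure zero in $\theta$ for generic returns, but it should be stated rather than assumed silently. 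Everything else --- boundedness of the iterates (from compactness of $\Theta$, with projection if needed), convergence of the function values, and the passage to expectations --- is routine given (i)--(iii).
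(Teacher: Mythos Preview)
Your proposal is correct and follows the same high-level approach as the paper's proof sketch, which simply invokes boundedness of generalized gradients (Lemma~\ref{lem:bounded-rev}), Lipschitz continuity from the regularity of $\log\pi_\theta$ on compact $\Theta$, and then defers to Robbins--Monro / Kushner--Yin nonsmooth stochastic-approximation results. Your version is considerably more careful than the paper's two-line sketch---in particular, you correctly isolate path-differentiability (via definability in an o-minimal structure) and the subdifferential-interchange step as the substantive technical hypotheses that the paper leaves entirely implicit.
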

\begin{proof}[Sketch]
By Lemma~\ref{lem:bounded-rev}, generalized gradients are uniformly bounded; regularity of $\log\pi_\theta$ on compact $\Theta$ implies Lipschitz continuity. 
Robbins--Monro / Kushner--Yin results for non-smooth stochastic approximation apply.
\end{proof}

\subsection{Why RIFT Preserves Multimodality}
By Proposition~\ref{prop:pairwise-rev}, ascent compares $\hat A/\pi_{\mathrm{old}}$: under peaky $\pi_{\mathrm{old}}$, underrepresented high-$\hat A$ candidates receive stronger positive updates, preserving and enhancing diversity.
In the special case $\pi_{\mathrm{old}}$ is (approximately) uniform, \method reduces to a listwise ranking ascent that directly promotes larger $\hat A$.

\section{Experimental Details}
\label[appsection]{ap:experiment}

\subsection{Experiment Framework}
\label[appsection]{ap:experitment_framework}
Our framework for reliable AV-centric closed-loop simulation is developed upon well-established traffic simulation platforms, notably the CARLA Leaderboard~\cite{CARLA_Leaderboard} and Bench2Drive~\cite{jia2024bench2drive}, which serve as standard benchmarks in autonomous driving research. Traditionally, these platforms use predefined scenarios along the AV's global route to evaluate the multi-dimensional performance of AV methods. In contrast, we replace these static scenarios with dynamically generated traffic flows by randomly spawning background vehicles around the AV's global path and simulating their behavior using rule-based driving policies, as described in \Cref{subsec:task_refefinition}. Through the CBV identification mechanism outlined in \Cref{ap:CBV_identification}, we naturally introduce interactions between the AV and CBVs, thereby generating continuous, interactive scenarios over time. This framework serves as the foundation for both the training and evaluation processes in this paper.

\subsection{Route-level Analysis for CBV Identification}
\label[appsection]{ap:CBV_identification}

Identifying Critical Background Vehicles (CBVs) is essential to our AV-centric closed-loop simulation. Let $ \mathcal{V}_\text{AV} $ denote the autonomous vehicle (AV), and $ \mathcal{V}_\text{BV} = \{\mathcal{V}_i\}_{i=1}^{N} $ represent the set of background vehicles in the environment. The AV navigates along a predefined global route $ \mathcal{P} = \{p_k\}_{k=1}^{M} $, where each $ p_k $ corresponds to a waypoint along the route. The goal of CBV identification is to select background vehicles that are likely to share the AV's destination and have similar estimated travel distance, thereby facilitating route-level interactions between the AV and CBVs. The primary criterion for identifying CBVs is the relative \textit{distance-to-goal} difference between the AV and each background vehicle. This is mathematically expressed as:

\begin{equation}
\left|\hat{D}_\text{global}(p_k, \mathcal{V}_i) - \hat{D}_\text{global}(p_k, \mathcal{V}_\text{AV})\right| < \delta,
\end{equation}

where, $ \hat{D}_\text{global}(p_k, \mathcal{V}_i) $ and $ \hat{D}_\text{global}(p_k, \mathcal{V}_\text{AV}) $ denote the estimated travel distance required for the background vehicle $ \mathcal{V}_i $ and the AV to reach waypoint $ p_k $, respectively. The distance-to-goal for each vehicle is computed by determining the distance from its current position to the target waypoint $ p_k $ using the A* global path planning algorithm~\cite{hart1968a_star}. A threshold $ \delta $ is introduced to define the maximum allowable difference in distance-to-goal. A background vehicle is considered critical and included in the CBV set $ \mathcal{C} $ if the absolute distance-to-goal difference between it and the AV is smaller than $ \delta $.

This approach selects background vehicles whose destinations and estimated travel distances are sufficiently aligned with those of the AV, thereby ensuring meaningful and realistic route-level interactions. Once a CBV is identified, the planning path previously generated via A* during distance-to-goal estimation is directly adopted as its global navigation path, which is further transformed into the reference line for downstream CBV planning, naturally introducing route-level interactions between the AV and CBVs. The threshold $ \delta $ serves as a tunable parameter to adjust the sensitivity of the CBV selection process. In this study, we set $ \delta $ to $15m$ to achieve a balanced trade-off between sensitivity and selection accuracy.

\subsection{Algorithm Framework}
\label[appsection]{ap:framework}
For clarity, we summarize the procedure of \method\ within our AV-centric closed-loop simulation framework in~\Cref{alg:algo}. The planning model is initialized from the IL pre-trained checkpoint provided by Pluto official codebase\footnote{\url{https://github.com/jchengai/pluto}}, followed by RL fine-tuning within the CARLA simulator~\cite{carla} to generate realistic and controllable traffic scenarios.
\begin{algorithm}[tbh]
\caption{Procedure for \method\ in the AV-Centric Closed-Loop Simulation Framework.}
\label{alg:algo}
\begin{algorithmic}[1]
    \State \textbf{Input}: IL pre-trained planning model $\pi_{\theta_{\text{init}}}$, buffer $\mathcal{D}$ \Comment{IL pre-training (nuPlan~\cite{caesar2021nuplan})}
    \State planning model $\pi_\theta \leftarrow \pi_{\theta_{\text{init}}}$
    \For{$\text{iteraction}=1,\dots,I$} \Comment{RL fine-tuning (CARLA~\cite{carla})}
        \State Update the old planning model $\pi_{\theta_{old}} \leftarrow \pi_{\theta}$
        \While{$\mathcal{D}$ not full} \Comment{Collect rollout data}
            \For{$\text{step}=1,\dots,T$}
                \State \multiline{Obtain $G$ candidate trajectories $\{\tau_i\}_{i=1}^{G}$ from $\pi_{\theta_{old}}$ for each CBV \Comment{Policy inference}}
                \State \multiline{Compute simulated rollouts $\{\widetilde{\tau}_i\}_{i=1}^{G}$ from $\{\tau_i\}_{i=1}^{G}$ \Comment{Forward simulation}}
                \State \multiline{Compute reward $\{R_i\}_{i=1}^{G}$, advantage $\{\hat{A}_i\}_{i=1}^{G}$ for each $\widetilde{\tau}_i$ with \Cref{eq:group_reward_advantage}} 
                \State \multiline{Store transition into buffer $\mathcal{D}$}
            \EndFor
        \EndWhile
        \For{\method\ $\text{iteration}=1,\dots,\mu$} \Comment{Policy fine-tuning}
            \State \multiline{Sample mini-batches transition from the buffer $\mathcal{D}$}
            \State \multiline{Update model $\pi_\theta$ by maximizing the \method\ objective (\Cref{eq:rift_loss})}
        \EndFor
    \EndFor
    \State \textbf{Output}: RL fine-tuned planning model
\end{algorithmic}
\end{algorithm}

\subsection{Training Details}
\label[appsection]{ap:train_detail}
We perform RL fine-tuning on selected modules of the IL pre-trained planning model (Pluto). As shown in the ablation results (\Cref{subsec:albation}), fine-tuning only the trajectory scoring head achieves the best trade-off between realism and controllability. Accordingly, all fine-tuning baselines adopt this setting to ensure consistency and fair comparison. Our training framework is built on the open-source Lightning platform\footnote{\url{https://github.com/Lightning-AI/pytorch-lightning}}. Fine-tuning is conducted on $2\times$ \texttt{Bench2Drive220}, while evaluation is performed on \texttt{dev10}, both from the Bench2Drive project. All experiments are conducted on NVIDIA GeForce RTX 4090D GPUs, with each fine-tuning run taking approximately 8 hours on a single GPU. Detailed training setups and hyperparameter configurations are provided in \Cref{tab:train-parameters} and \Cref{tab:model-parameters}.

\subsection{Baselines Detailed Description}
\label[appsection]{ap:baseline}
To comprehensively evaluate \method\ in an AV-centric closed-loop simulation environment, we compare it against a range of baselines, including pure imitation learning (IL), pure reinforcement learning (RL), and various fine-tuning approaches based on IL, RL, or their combination. We initialize all fine-tuning methods from the pre-trained Pluto checkpoint and fine-tune only the trajectory scoring head to preserve trajectory-level realism. The details of each baseline are summarized below.
\begin{itemize}[leftmargin=*]
    \item \textit{Pluto}~\cite{cheng2024pluto} is an open-source IL-based planning framework for autonomous driving. It processes vectorized scene representations as input and outputs multimodal trajectories for downstream planning. In the AV-centric closed-loop simulation, the method directly uses a pre-trained checkpoint without additional fine-tuning.
    \item \textit{FREA}~\cite{chen2024frea} is an RL-based approach designed to generate safety-critical yet AV-feasible scenarios. It incorporates a feasibility-aware training objective. In the AV-centric closed-loop simulation, FREA selects potential collision points along the AV's global route as adversarial goals.
    \item \textit{PPO}~\cite{chen2024frea} is a variant of FREA that focuses solely on generating safety-critical scenarios. Unlike FREA, it disregards the feasibility constraints of AV and treats adversariality as the only optimization objective.
    \item \textit{FPPO-RS}~\cite{chen2024frea} is another FREA variant that integrates AV's feasibility constraints into the reward shaping process, thereby balancing adversariality with scenario reasonability.
    \item \textit{PPO-Pluto} fine-tunes the pre-trained planning model using the PPO algorithm~\cite{PPO}. The fine-tuning follows the same reward structure as detailed in \Cref{ap:reward_setting}, aligning with \method.
    \item \textit{REINFORCE-Pluto} employs the REINFORCE algorithm~\cite{sutton1999reinforce} to fine-tune the pre-trained Pluto model under the same reward design as detailed in \Cref{ap:reward_setting}.
    \item \textit{GRPO-Pluto} utilizes the basic GRPO algorithm~\cite{shao2024grpo} for fine-tuning, employing the pre-trained Pluto model as the reference for KL regularization, while incorporating the standard PPO-Clip.
    \item \textit{SFT-Pluto} is a purely supervised fine-tuning approach, where PDM-Lite~\cite{Beibwenger2024pdmLite} serves as the expert model, providing supervision at the target speed level.
    \item \textit{RTR-Pluto}~\cite{zhang2023rtr} is a hybrid framework combining imitation and reinforcement learning. While the original RTR utilizes human driving trajectories as supervision, our setting replaces this with PDM-Lite due to the lack of human-level demonstrations. The RL component uses sparse infraction-based rewards, consistent with the original RTR, and applies PPO for optimization.
    \item \textit{RS-Pluto}~\cite{peng2024improving} also adopts a hybrid IL+RL paradigm, originally trained via REINFORCE using ground-truth supervision and sparse rewards to ensure safety and realism. In our adaptation, PDM-Lite substitutes the ground-truth expert, while the rest of the methodology remains unchanged.
\end{itemize}

\subsection{Forward Simulation}
\label[appsection]{ap:forward_simulation}
Trajectory-based imitation learning often overlooks underlying system dynamics, leading to discrepancies between planned and executed behavior~\cite{cheng2024planTF}. To address this issue, we perform a forward simulation for each candidate trajectory ${\tau}_i$ of the CBV, yielding a rollout $\widetilde{\tau}_i$. The simulation couples a PID controller for trajectory tracking with a kinematic bicycle model for state propagation. The PID controller is identical to that used during closed-loop execution, ensuring behavioral consistency between training and deployment. By evaluating rollouts rather than raw trajectories, we reduce this dynamics gap and obtain more reliable assessments.

In parallel, we also forecast the motions of surrounding actors. During data collection, the current actions $a^{\text{bg}}$ of surrounding actors are recorded. Following the rule-based forecasting scheme in~\cite{Beibwenger2024pdmLite}, these actions are assumed constant over the forecast horizon and are used to advance surrounding states. The resulting actor forecasts are combined with the CBV rollouts to compute rewards, thereby ensuring that interaction effects with the environment are faithfully captured in evaluation.

While subsequent rollout is open-loop, the first transition is closed-loop. This step integrates (i) the same PID policy as in real execution, (ii) the observed current actions of surrounding actors, and (iii) a kinematic bicycle model that approximates CARLA’s single-step dynamics. Accordingly, the transition from $(s, a, a^{\text{bg}})$ to $s'$ produces a reward consistent with the standard RL structure $(s, a)\to s'\to r$. Subsequent rollout steps serve as open-loop estimates of longer-horizon outcomes, enriching evaluation while preserving closed-loop fidelity at the transition boundary.

\subsection{State-Wise Reward Model Setup}
\label[appsection]{ap:reward_setting}
To capture diverse human driving styles, we decompose driving behaviors into distinct reward components, following~\cite{cusumano2025robust}. Different styles are constructed by combining weights assigned to each reward component (detailed in \Cref{tab:reward-parameters}), enabling a range of behaviors from aggressive to conservative. The total driving reward is defined as:
\begin{equation}
    R =R_\text{collision}+ R_\text{off-road} + R_\text{comfort} + R_\text{lane} + R_\text{velocity} + R_\text{timestep}.
\end{equation}
The individual terms are described as follows:
\begin{itemize}[leftmargin=*]
    \item $R_\text{collision}=  - \left( \alpha_\text{collision} + \lvert v \rvert \right)\mathbbm{1}_\mathrm{collision}$: penalizes collisions, with higher penalties at higher speeds.
    \item $R_\text{off-road}= -\alpha_\text{boundary} \mathbbm{1}_\text{boundary}$ $\alpha_{\text{boundary}}$: penalizes deviations from the drivable area.
    \item $R_\text{comfort}= -\alpha_\text{comfort} \left( \mathbbm{1}_{\lvert a \rvert > 4} + \mathbbm{1}_{\lvert \omega \rvert > 4} \right)$ penalizes excessive acceleration and angular acceleration.
    \item $R_\text{l-align}=\alpha_{\text{l-align}} \left(\min\left(\cos\left(\theta_f\right), 0\right) + \alpha_{\text{vel-align}}\min\left(\cos\left(\theta_f\right) * v, 0 \right)+ 0.25 \left(1 - \frac{\lvert \theta_f \rvert}{\pi/2}\right)\right)$: guides the agent to follow the correct driving direction and remain parallel to the lane markings.
    \item $R_\text{l-center}= -\alpha_\text{l-center}\left(\mathbbm{1}_{\cos\left(\theta_f\right) > 0.5}*\left(\lvert x_f - \alpha_{\text{center-bias}} \rvert - \frac{0.05}{\exp \left(\lvert x_f - \alpha_{\text{center-bias}} \rvert - 0.5\right)}\right)\right)$: guides the agent to prefer trajectories that remain centered within the lane.
    \item $R_\text{velocity}= \alpha_{\text{velocity}} \max\left( \cos \left(\theta_f \right), 0.0\right) \, \mathbbm{1}_{3 < \lvert v \rvert < 20}*\lvert v \rvert$: promotes forward movement and biases the agent toward choosing routes with consistent traffic flow rather than traffic jams.
    \item $R_\text{timestep}=  - \alpha_{\text{timestep}} \, \mathbbm{1}_{\lvert v \rvert > 0  \,\lor \, \lvert a \rvert > 0}$ applies a small per-step penalty, encouraging efficiency. It is disabled when the agent is stationary to allow appropriate waiting behavior at intersections.
\end{itemize}

Building on the reward definitions above, we construct a state-wise reward model $\mathrm{StateWiseRM}(\cdot)$, which computes a scalar reward based on a set of interpretable features extracted from each rollout point $\widetilde{\tau}^t_i$. Specifically, we define a feature extraction function $\phi(\widetilde{\tau}^t_i)$ as:
\begin{equation}
\phi(\widetilde{\tau}^t_i) = (\mathbbm{1}_\text{collision}, \mathbbm{1}_\text{boundary}, a_\text{long}, a_\text{lat}, \theta_f, x_f, v, a), 
\end{equation}
where:
\begin{itemize}[leftmargin=*]
\item $\mathbbm{1}_\text{collision}$ and $\mathbbm{1}_\text{boundary}$ are binary indicators of potential collisions and off-road violations;
\item $a_\text{long}$ and $a_\text{lat}$ denote the longitudinal and lateral acceleration;
\item $v$ and $a$ are the magnitudes of velocity and acceleration;
\item $x_f$ is the lateral distance to the nearest lane centerline;
\item $\theta_f$ is the heading deviation concerning the lane direction.
\end{itemize}
The state-wise reward is then computed as: 
\begin{equation} 
r^t_i = \mathrm{StateWiseRM}(\phi(\widetilde{\tau}^t_i), s). 
\end{equation}
All features, except the infraction indicators, are directly derived from the rollouts. To estimate future infractions, we follow the forecasting model in~\cite{Beibwenger2024pdmLite} to simulate other agents' future positions based on current states and actions and identify collisions via bounding box overlap. Off-road violations are detected by projecting the rollout trajectory onto the HD map and checking its occupancy relative to the drivable area polygon set.

\begin{table}[t]
\centering
\begin{minipage}[b]{0.3\linewidth}
\tablestyle{2.0pt}{1.05}
\caption{\small Hyperparameters used in \method\ Training.}
\label{tab:train-parameters}
\begin{tabular}{lcc}
\toprule
\textbf{Parameter} & \textbf{Value}                 \\
\midrule
Batch size                    & 256                 \\
Rollout buffer capacity       & 4096                \\
Fine-tune initial LR          & $1 \times e^{-4} $  \\
Minimum LR                    & $1 \times e^{-6} $  \\
LR decay across iteration     & 0.9                 \\
LR schedule                   & Cosine              \\
Num. RIFT epoch               & 16                  \\
Warmup Epoch of RIFT          & 3                   \\
AdamW weight-decay  & $1 \times e^{-5} $  \\
\bottomrule
\end{tabular}
\end{minipage}
\hfill
\begin{minipage}[b]{0.3\linewidth}
\tablestyle{2.0pt}{1.05}
\caption{\small Hyperparameters of \method\ or RL baselines.}
\label{tab:model-parameters}
\begin{tabular}{lcc}
\toprule
\textbf{Parameter} & \textbf{Value} \\
\midrule
PPO  clipping ratio $\epsilon$      & 0.2   \\
Dual-clip ratio $c$                 & 3     \\
Discount factor $\gamma$            & 0.98  \\
$\lambda_\text{GAE}$~\cite{GAE}     & 0.98  \\
Hidden dimension $D$                & 128   \\
Num. lon. queries $N_{lon}$         & 12    \\
Traj. time horizon $T$              & 80    \\
Map radius                          & 120m  \\
Frame rate                          & 10Hz  \\
\bottomrule
\end{tabular}
\end{minipage}
\hfill
\begin{minipage}[b]{0.3\linewidth}
\tablestyle{2.0pt}{1.05}
\caption{\small Reward Parameters for Different Driving Styles.}
\label{tab:reward-parameters}
\begin{tabular}{lcc}
\toprule
\textbf{Parameter} & \textbf{Normal} & \textbf{Aggressive} \\
\midrule
\rowcolor{gray!25}$\alpha_\text{collision}$ & 20.0  & 5.0   \\
$\alpha_\text{boundary}$       & 5.0   & 5.0   \\
$\alpha_\text{comfort}$        & 0.8   & 0.8   \\
$\alpha_\text{l-align}$       & 0.5   & 0.5   \\
$\alpha_\text{vel-align}$     & 0.05  & 0.05  \\
$\alpha_\text{l-center}$      & 0.6   & 0.6   \\
$\alpha_\text{center-bias}$   & 0.0   & 0.0   \\
\rowcolor{gray!25}$\alpha_\text{velocity}$ & 0.1 & 0.2   \\
$\alpha_\text{timestep}$       & 0.1   & 0.1   \\
\bottomrule
\end{tabular}
\end{minipage}
\end{table}

\subsection{Controllability and Realism Metrics}
\label[appsection]{ap:main_metric}

\textbf{Kinematic Metrics.} Following~\cite{montali2023wosac}, kinematic realism is typically evaluated against ground-truth trajectories. As CARLA provides no expert demonstrations, we adopt distribution-level metrics~\cite{chen2024review, huang2025IntSim} to assess CBV behavior in terms of speed and acceleration. Specifically, we employ three measures—the Shapiro–Wilk test on speed (S-SW), the Wasserstein Distance on speed (S-WD), and the Shapiro–Wilk test on acceleration (A-SW)—defined as follows:

\begin{itemize}[leftmargin=*]
\item \textit{Wasserstein Distance (WD)~\cite{vaserstein1969wd}:} measures the distance between two distributions $\mu$ and $\nu$. Since CARLA provides a predefined target speed for agents, we use WD to compare the simulated CBV speed distribution with the target speed distribution as the reference.
\begin{equation}
\text{WD}(\mu, \nu) = \inf_{\gamma \in \Pi(\mu, \nu)} \mathbb{E}_{(x, y) \sim \gamma} \left[ d(x, y) \right].
\end{equation}
\item \textit{Shapiro–Wilk test (SW)~\cite{shapiro1965sw}:} evaluates the normality of speed and acceleration distributions—a simplifying assumption supported by empirical traffic studies~\cite{zhang2023rtr,yan2023NeuralNDE}—to capture the statistical naturalness of CBV motion.
\begin{equation}
\text{SW} = \frac{\left( \sum_{i=1}^{n} a_i x_{(i)} \right)^2}{\sum_{i=1}^{n} (x_i - \bar{x})^2},
\end{equation}
where $a_i$ are coefficients, $x_{(i)}$ are the ordered data points, $x_i$ are the sample values, $\bar{x}$ is the sample mean, and $n$ is the number of data points.
\end{itemize}

\textbf{Interaction Metrics.} Following the metric design principles proposed in the WOSAC challenge~\cite{montali2023wosac} and other widely adopted evaluation frameworks~\cite{lin2024ccdiff,chen2024review}, we adopt a set of well-established metrics to comprehensively evaluate agent interactions:
\begin{itemize}[leftmargin=*]
\item \textit{Collision Per Kilometer (CPK)~\cite{chen2024review}:} the average number of scenario collisions per kilometer of driving distance.
\item \textit{Route Progress (RP)~\cite{chen2024review}:} the total distance traveled by all CBVs, reflecting route completion.
\item \textit{2D Time-to-Collision (2D-TTC)~\cite{guo20232dttc}:} the minimum of longitudinal and lateral time-to-collision from the AV's perspective, capturing the interaction risk posed by CBVs.
\item \textit{Anticipated Collision Time (ACT)~\cite{venthuruthiyil2022act} :} a safety-critical metric measuring the AV's proximity to potential collisions, reflecting the interaction intensity introduced by CBVs.
\end{itemize}

\textbf{Map Metrics.} Map metrics evaluate adherence to road geometry, reflecting how well agents remain within drivable areas and comply with map constraints.
\begin{itemize}[leftmargin=*]
\item \textit{Off-Road Rate (ORR)~\cite{chen2024review}:} the percentage of time CBVs spend off-road on average.
\end{itemize}
\section{AV Evaluation Details}
\label[appsection]{ap:AV_eval}

\subsection{AV Methods Implementation}
\label[appsection]{ap:AV_eval_method}
To assess the effectiveness of \method\ in generating reliable and interactive scenarios for AV evaluation in the AV-centric closed-loop simulation environment, we evaluate the following representative and stable AV methods:
\begin{itemize}[leftmargin=*]
\item \textit{PDM-Lite~\cite{Beibwenger2024pdmLite}:} A rule-based privileged expert method that achieves state-of-the-art performance on the CARLA Leaderboard 2.0 by leveraging components such as the Intelligent Driver Model and the kinematic bicycle model. This open-source method serves as a strong baseline for comparison.
\item \textit{PlanT~\cite{Renz2022PlanT}:} An explainable, learning-based planning method that operates on an object-level input representation and is trained through imitation learning.  
\item \textit{UniAD~\cite{hu2023_uniad}:} A planning‑oriented unified framework integrating perception, prediction, mapping, and planning into one end‑to‑end model using query-based interfaces.
\item \textit{VAD~\cite{jiang2023vad}:} A fast, end‑to‑end vectorized driving paradigm representing scenes with vectorized motion and map elements for efficient, safe planning.
\end{itemize}

\subsection{AV Evaluation Metrics}
\label[appsection]{ap:AV_eval_metric}
As detailed in \Cref{ap:experitment_framework,ap:train_detail}, we develop an AV-centric closed-loop simulation environment, including a training and evaluation pipeline based on Bench2Drive. The AV closed-loop evaluation metrics proposed in Bench2Drive extend the original metrics of the CARLA Leaderboard by emphasizing the specific strengths and weaknesses of different methods across various aspects, such as merging and overtaking, thereby making them suitable for evaluating performance under predefined scenarios. However, as noted in \Cref{ap:experitment_framework}, replacing predefined scenarios with CBV-generated traffic scenarios precludes the evaluation of specific AV capabilities.
To systematically assess the quality of traffic scenarios generated by different CBV methods, we follow the practice of KING~\cite{king} and introduce PDM-Lite~\cite{Beibwenger2024pdmLite}—a rule-based privileged planner—as a reference AV. By measuring its performance under various CBV methods, we evaluate:
\begin{itemize}[leftmargin=*]
    \item \textit{Feasibility}, via PDM-Lite's Driving Score (DS)—a high DS indicates the PDM-Lite can complete its route without severe collisions or rule violations, implying the generated traffic scenario is feasible.
    \item \textit{Naturalness}, via a newly proposed metric, Blocked Rate (BR)—a low BR suggests that CBVs do not unrealistically obstruct the AV, reflecting naturalistic behavior.
\end{itemize}
These metrics enable a principled comparison of traffic quality generated by different CBV methods. Furthermore, to assess the capacity of generated traffic scenarios to expose AV limitations, we test multiple learning-based AV methods under an identical CBV method and quantify their relative performance drop compared to PDM-Lite~\cite{Beibwenger2024pdmLite}. The relative driving score degradation ($\Delta{DS}$) reflects how effectively the traffic scenario stresses the AV policy, with larger drops indicating stronger capability in revealing planning weaknesses.

The evaluation metrics are summarized as follows:
\begin{itemize}[leftmargin=*]
    \item \textit{Driving Score (DS):} $R_i P_i$ — The main metric of the leaderboard, calculated as the product of route completion and the infraction penalty. Here, $R_i$ represents the percentage of completion of the $i$-th route, and $P_i$ denotes the infraction penalty. The maximum value is 100.
    \item \textit{Block Rate (BR):} The average number of occurrences where a CBV fails to navigate its route normally and obstructs the AV's progress.
    \item \textit{Relative Driving Score Degradation ($\Delta{DS}$):} The reduction in Driving Score of a learning-based AV compared to PDM-Lite under the same CBV method, indicating how effectively the scenario reveals weaknesses in AV planning.
\end{itemize}

\subsection{End-to-End AV Visualization}
\label[appsection]{ap:e2e_vis}
To further validate the feasibility of \method\ as a closed-loop evaluation framework, we extend its application beyond traffic scenario generation to testing end-to-end autonomous driving algorithms. In contrast to conventional adversarial approaches that often introduce unrealistic or overly aggressive behaviors, \method\ generates traffic flows that preserve the realism of human driving styles, engage the autonomous vehicle in genuine interactive behaviors, and maintain feasibility by ensuring that the resulting scenes, though diverse and stress-inducing, remain solvable for the end-to-end method. This balance enables \method\ to evaluate robustness under credible conditions while avoiding degenerate or unsolvable scenarios.

\Cref{fig:e2e_vis} presents representative interactions between \method-generated traffic flows and two representative end-to-end driving models, UniAD and VAD. As shown, \method\ adapts seamlessly to different driving policies, producing realistic and interactive scenes where the autonomous vehicle must negotiate with surrounding traffic. These results underscore the capability of \method\ to provide realistic yet interactive closed-loop evaluations, highlighting its potential as a versatile tool for testing the robustness of end-to-end AV systems.

\begin{figure}[t]
    \centering
    \includegraphics[width=\textwidth]{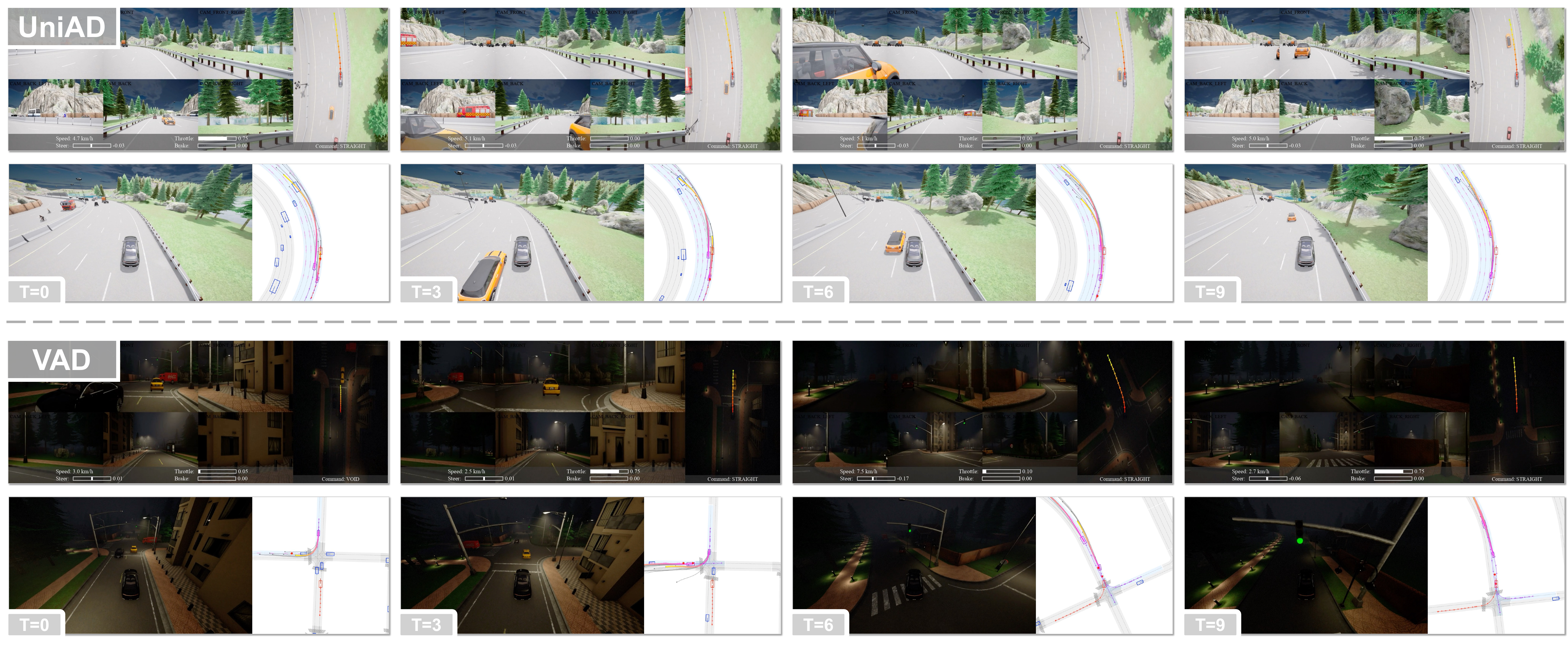}
    \caption{
    \small Representative closed-loop interactions between \method-generated traffic flows and end-to-end autonomous driving algorithms. UniAD (top) and VAD (bottom) are shown interacting with surrounding vehicles orchestrated by \method, which preserves realistic driving styles while enabling dynamic CBV–AV interactions. The controlled background vehicle (CBV) is highlighted in \highlighttransp{CBV-purple}{purple}, the autonomous vehicle (AV, end-to-end) in \highlighttransp{AV-red}{red}, and other background vehicles (BVs) in \highlighttransp{BV-blue}{blue}.
    }
    \vspace{-12pt}
    \label{fig:e2e_vis}
\end{figure}

\section{Additional Results}

\subsection{Detailed Qualitative Results of Style-Level Controllability}
\label[appsection]{ap:controllability}

As discussed in \Cref{subsec:albation}, we investigate the style-level controllability of \method\ under different reward configurations. The aggressive variant applies a reduced collision penalty and places greater emphasis on driving efficiency (\Cref{tab:reward-parameters}), encouraging assertive behaviors such as overtaking. In contrast, the normal configuration imposes a higher collision penalty to promote safer and more conservative driving behaviors.

Quantitative results in \Cref{tab:ablation} show that the aggressive variant achieves greater driving efficiency at the expense of more frequent collisions and off-road events. To complement these findings, \Cref{fig:controllable} presents a qualitative comparison in a single-lane intersection scenario where a leading BV halts at a stop sign. The aggressive CBV variant attempts an overtaking maneuver, resulting in a collision, whereas the normal CBV variant yields and waits, demonstrating distinct behavioral patterns induced by different reward preferences. These results highlight the controllability of \method\ in modulating driving style according to user-specified reward configuration.

\begin{figure}[t]
    \centering
    \includegraphics[width=\textwidth]{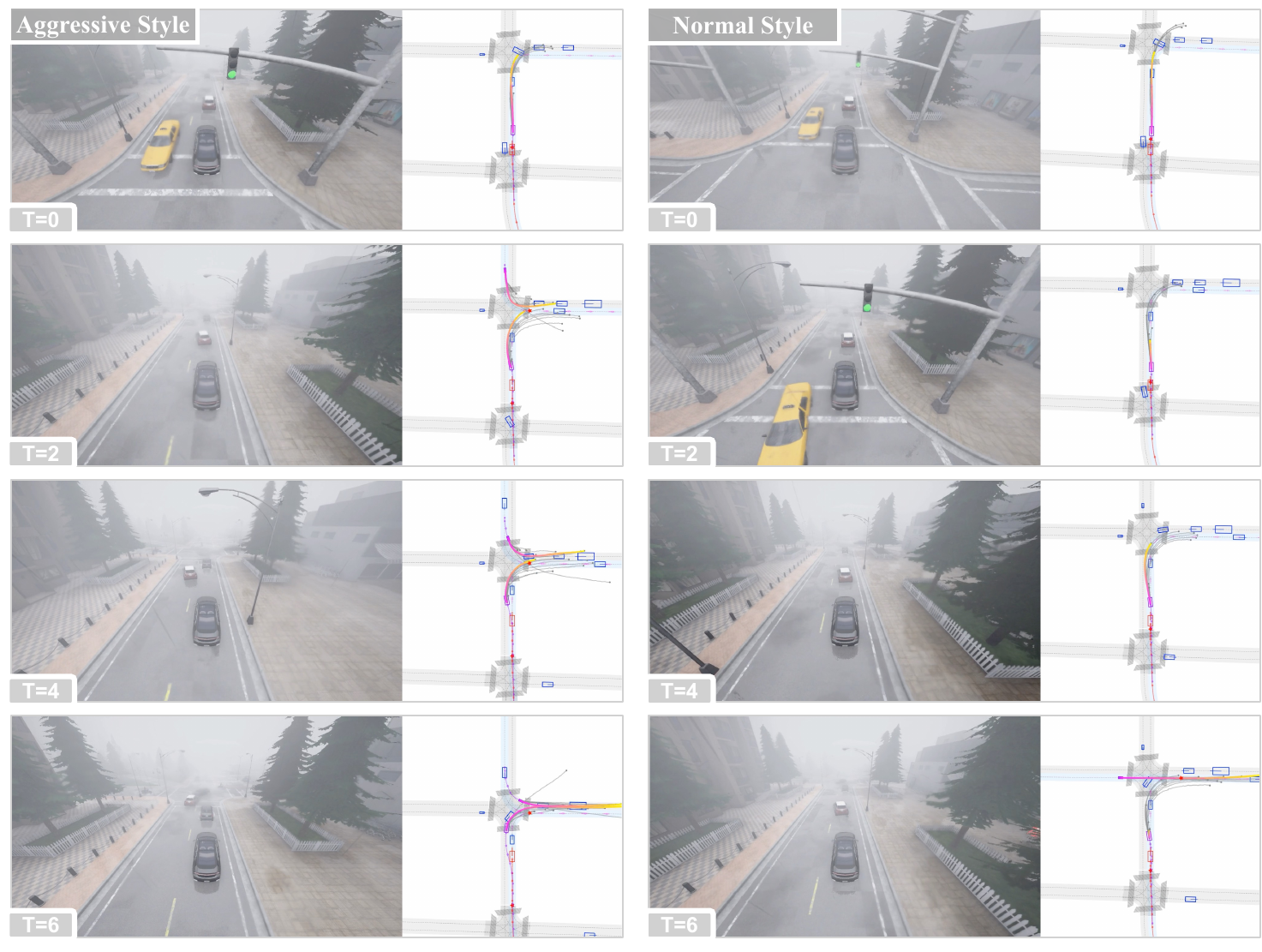}
    \caption{
    \small Qualitative illustration of \method's style-level controllability under different reward configurations. CBV is marked in \highlighttransp{CBV-purple}{purple}, AV (PDM-Lite) is in \highlighttransp{AV-red}{red}, and BVs are in \highlighttransp{BV-blue}{blue}.
    }
    \vspace{-12pt}
    \label{fig:controllable}
\end{figure}

\subsection{Detailed Analysis in Driving Comfort}
\label[appsection]{ap:comfort}
\textbf{Metrics.} To further evaluate the driving comfort of different CBV methods, we define several comfort metrics based on Bench2Drive, which assesses agent comfort through acceleration and jerk profiles. Specifically, we measure comfort using the following metrics:
\begin{itemize}[leftmargin=*]
    \item \textit{Uncomfortable Rate (UCR):} the percentage of simulation time during which CBVs experience discomfort.
    \item \textit{Driving Jerk (Jerk):} the time derivative of acceleration, quantifying the abruptness of acceleration changes and the smoothness of CBV rollouts.
\end{itemize}
To determine whether a CBV's current state is considered comfortable, we adopt the Frame Variable Smoothness (FVS) criterion from Bench2Drive:
\begin{equation}
\begin{split}
        \text{Frame Variable Smoothness (FVS)} &= \begin{cases} 
\text{True} & \text{if lower bound}\leq p_{\,i} \leq \text{upper bound}. \\
\text{False} & \text{otherwise}
\end{cases} \\
 & p \in \text{smoothness vars}, \, 0\leq i \leq \text{total frames}
\end{split}
\end{equation}
The smoothness variables include longitudinal acceleration (expert bounds: [-4.05, 2.40]), maximum absolute lateral acceleration (expert bounds: [-4.89, 4.89]), and maximum jerk magnitude (expert bounds: [-8.37, 8.37]).

% \begin{table}[t]
\begin{figure*}[t]
\vspace{-2pt}
\begin{minipage}[h]{.53\textwidth}
\centering
\scriptsize
\tablestyle{2.0pt}{1.12}
\captionof{table}{\small \textbf{Comparison of CBV Comfort Metrics across Various AV Methods.} Each metric is evaluated across three random seeds.}
\label{tab:cbv_comfort}
\resizebox{1.0\textwidth}{!}{
\begin{threeparttable}
\begin{tabular}{lcccc}
\toprule
\multirow{2}{*}{\textbf{Method}} & \multicolumn{2}{c}{\textbf{PDM-Lite}} & \multicolumn{2}{c}{\textbf{PlanT}} \\
\cmidrule(r){2-3}
\cmidrule(r){4-5}
& UCR $\downarrow$ & Jerk $\downarrow$ & UCR $\downarrow$ & Jerk $\downarrow$ \\
\midrule
Pluto & {56.45} \pmsd {4.14} & {-0.16} \pmsd {3.72} & {50.26} \pmsd {2.17} & {-0.42} \pmsd {3.38} \\
PPO & {74.76} \pmsd {2.71} & {-0.51} \pmsd {4.61} & {74.90} \pmsd {1.21} & {0.40} \pmsd {4.83} \\
FREA & {72.40} \pmsd {1.72} & {0.29} \pmsd {4.61} & {73.48} \pmsd {3.83} & {-0.15} \pmsd {4.91} \\
FPPO-RS & {68.33} \pmsd {1.90} & {-0.07} \pmsd {3.96} & {66.67} \pmsd {0.82} & {-0.15} \pmsd {3.95} \\
SFT-Pluto & {68.14} \pmsd {4.91} & {-0.06} \pmsd {4.06} & {59.78} \pmsd {4.72} & {-0.11} \pmsd {4.00} \\
RS-Pluto & {70.31} \pmsd {4.07} & {0.32} \pmsd {4.12} & {65.18} \pmsd {2.11} & {-0.16} \pmsd {4.07} \\
RTR-Pluto & {55.58} \pmsd {4.76} & {-0.19} \pmsd {3.37} & {45.12} \pmsd {2.66} & {-0.14} \pmsd {3.34} \\
PPO-Pluto & {58.29} \pmsd {2.70} & {-0.32} \pmsd {3.70} & {54.85} \pmsd {5.82} & {-0.07} \pmsd {3.40} \\
REINFORCE-Pluto & {68.10} \pmsd {1.22} & {0.23} \pmsd {3.96} & {64.94} \pmsd {5.36} & {-0.11} \pmsd {3.96} \\
GRPO-Pluto & {78.58} \pmsd {0.59} & {0.22} \pmsd {4.62} & {77.13} \pmsd {0.65} & {-0.23} \pmsd {4.58} \\
\rowcolor{gray!25}RIFT-Pluto (ours) & {76.90} \pmsd {2.82} & {0.59} \pmsd {4.12} & {72.41} \pmsd {4.02} & {0.21} \pmsd {4.44} \\
\bottomrule
\end{tabular}
\end{threeparttable}
}
\end{minipage}
\hfill
\begin{minipage}[h]{.46\textwidth}
\centering
\vspace{4pt}
\includegraphics[width=0.98\textwidth]{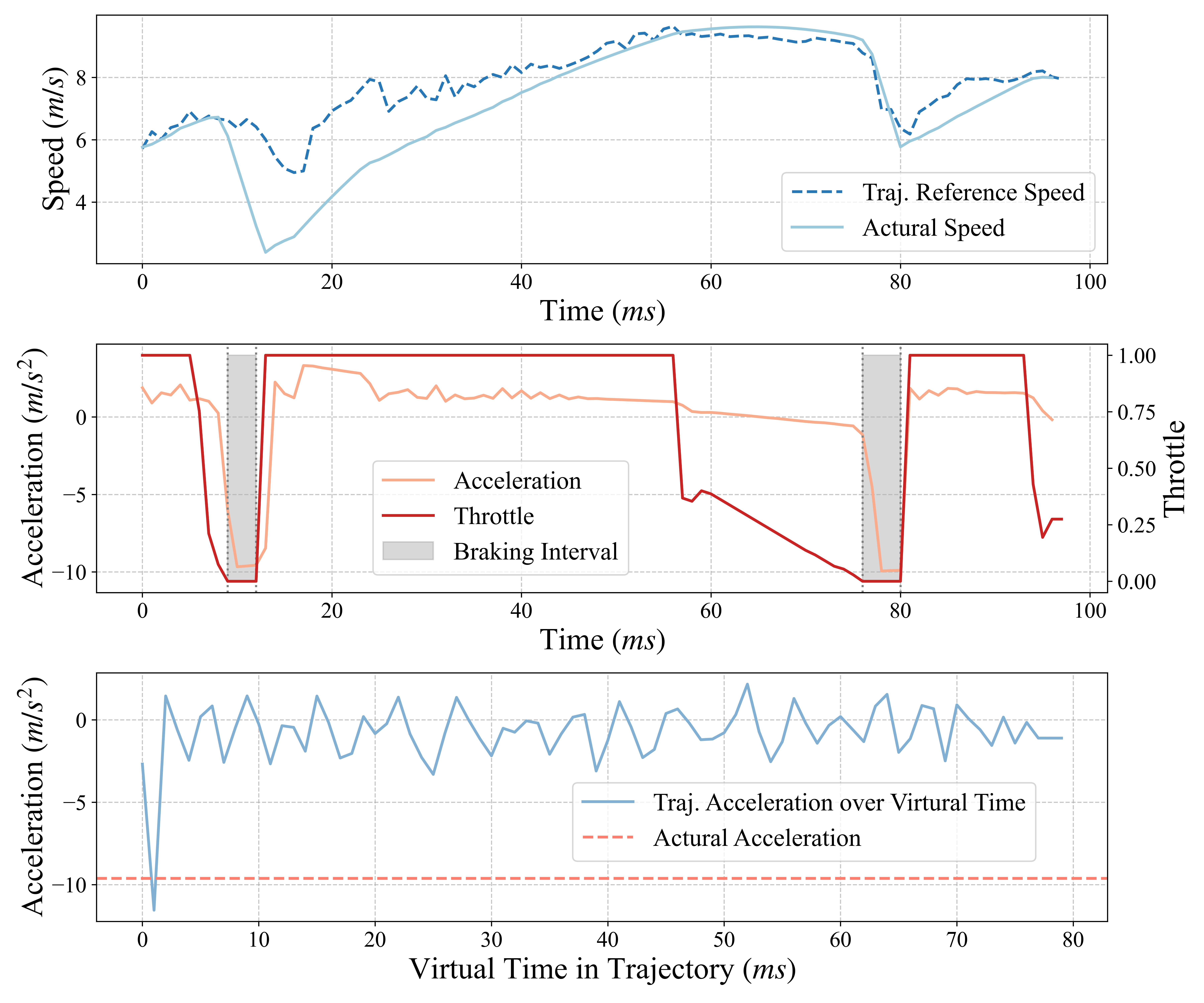}
\vspace{-5pt}
\caption{\small Controller Performance.}
\label{fig:controller_performance}
\end{minipage}
\end{figure*}
% \end{table}

\textbf{Main Results.} The quantitative results of the comfort metrics are presented in \Cref{tab:cbv_comfort}. All CBV methods exhibit notable levels of driving discomfort. Although the more conservative methods identified in \Cref{subsec:main_result} achieve relatively lower levels of discomfort, a high baseline of discomfort persists between methods.

To investigate the underlying causes of discomfort, we further decouple the planned trajectories from the executed control actions. In CARLA, most CBV methods rely on PID controllers to transform high-level trajectory waypoints into executable driving commands, including throttle, steering, and brake. As shown in \Cref{fig:controller_performance}, the upper panel illustrates the speed tracking curve, while the middle panel presents the raw throttle signal and corresponding acceleration profile.

Because trajectory generation is performed state-wise, predicting only the immediate next action, the reference states may vary discontinuously over time. These discontinuities are amplified by the PID controller, whose binary throttle/brake responses induce abrupt changes in acceleration, ultimately leading to discomfort during vehicle operation. Such execution-level instabilities are a major contributor to the discomfort observed across CBV methods.

While many CBV methods attempt to mitigate discomfort through fine-tuning strategies that incorporate post-action feedback via reward shaping or expert action alignment, \method\ adopts a different approach. It employs a state-wise reward model (see \Cref{ap:reward_setting}) that quantifies comfort within the trajectory's virtual forward simulation.

To further analyze this, we visualize both the actual acceleration after executing a selected trajectory and the corresponding virtual-time acceleration (shown in \Cref{fig:controller_performance}). The results reveal that while virtual-time acceleration aligns with actual motion at the beginning of the trajectory, it underestimates acceleration variations in later segments of the trajectory. This leads to an overly conservative estimation of trajectory-level discomfort, resulting in insufficient supervision during training and reflected in \method's comfort performance in \Cref{tab:cbv_comfort}.

In summary, the discomfort exhibited by CBV methods can be attributed to two primary sources:
\begin{itemize}[leftmargin=*]
\item \textbf{Tracking instability}, caused by discontinuities in planned trajectories and the limited control fidelity of PID controllers. Discrete, state-wise planning combined with low-resolution, often binary control outputs amplifies acceleration fluctuations and leads to uncomfortable motion.
\item \textbf{Inadequate comfort modeling}, particularly in state-wise reward formulations such as that adopted by \method\ and GPRO. These formulations fail to capture long-term trajectory-level discomfort, leading to insufficient supervision during training and suboptimal comfort performance.
\end{itemize}

\subsection{Visualization of the AV-Centric Closed-Loop Simulation}
\label[appsection]{ap:vis}
To qualitatively evaluate the robustness of \method\ across diverse AV-centric scenarios, we provide additional temporal visualizations of closed-loop simulations. As shown in \Cref{fig:diverse_extra}, the traffic scene consists of the autonomous vehicle (AV, controlled by PDM-Lite), background vehicles (BVs), and critical background vehicles (CBVs), which interact dynamically over time.

The visualizations demonstrate the ability of \method\ to generate temporally coherent, realistic, and controllable trajectories across a variety of traffic situations. Even under complex and evolving closed-loop conditions, \method\ maintains stable multimodal behavior, highlighting its effectiveness in simulating realistic and controllable traffic scenarios around the AV.

\section{Discussion and Broader Implications}
\label[appsection]{ap:discussion}

\subsection{Use of Large Language Models (LLMs)}
The large language model (LLM) was employed as a general-purpose writing assistant during the preparation of this manuscript. Its use was limited to:

\begin{itemize}[leftmargin=*]
\item Language refinement: improving grammar, syntax, and overall readability to ensure clarity and professionalism.
\item Style adjustments: suggesting more concise and precise phrasing while preserving the original meaning and technical content.
\end{itemize}

The LLM was not involved in research ideation, experimental design, data collection, analysis, or interpretation of results. All intellectual contributions and scientific conclusions are solely those of the authors. This disclosure is provided in accordance with the conference guidelines on LLM usage.

\subsection{Limitations and Future Work.}
\label[appsection]{ap:limitations}
In the current framework, the generation head is frozen during RL fine-tuning, and its reliability stems from the robust trajectory generation capability learned during IL pre-training. However, without reliable expert demonstrations, the realism and robustness of generated trajectories cannot be further improved during RL fine-tuning. This limitation highlights a key avenue for future research: developing methods—potentially leveraging RL or other self-improvement paradigms—that can enhance the trajectory generation quality without relying on expert demonstrations.

\subsection{Social Impact}
\label[appsection]{ap:social_impact}

\paragraph{Positive Societal Impacts.}
This work presents a practical framework that bridges the gap between realism and controllability in traffic simulation. By decoupling pre-training and fine-tuning, our method enables models pre-trained on real-world datasets to adapt effectively to physics-based simulators, preserving trajectory-level realism and route-level controllability while improving long-horizon closed-loop performance. This paradigm establishes a viable pathway for transitioning data-driven approaches to physics-based simulators, enabling more reliable closed-loop testing and training. Consequently, it advances safer and more robust autonomous systems.

\paragraph{Negative Societal Impacts.}
While fine-tuning in physics-based simulators improves closed-loop performance, it may also lead to overfitting to the specific characteristics of the simulator. As a result, the learned policy could struggle to generalize beyond the simulated environment, giving rise to a sim-to-real gap. This gap poses challenges for real-world deployment, as models that perform well in simulation may not retain the same level of reliability when applied to actual autonomous driving systems. Such discrepancies can affect the testing and training stages, highlighting the need for further work to ensure real-world transferability.

\begin{figure}[ht]
    \centering
    \begin{minipage}{\textwidth}
        \centering
        \includegraphics[width=0.95\textwidth]{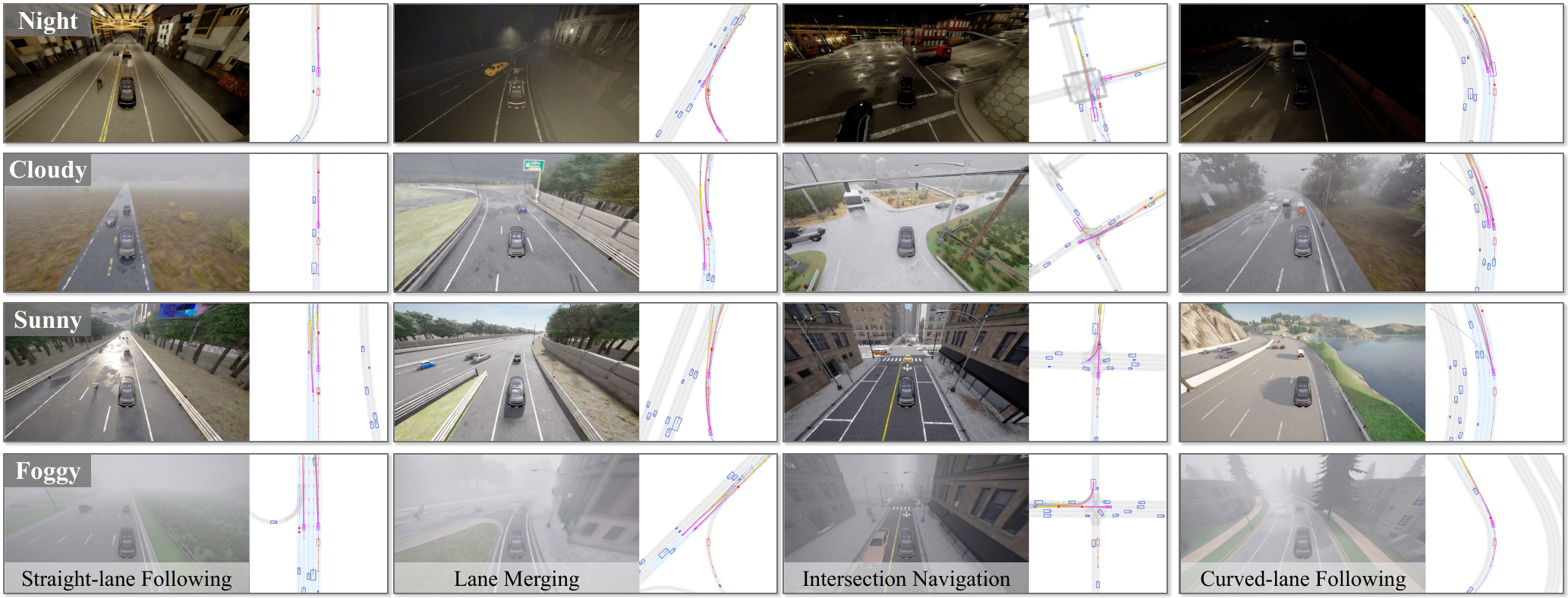}
        \caption*{\small (a) Robustness of \method\ across diverse AV-centric traffic scenarios.}
        \label{fig:diverse_clip}
    \end{minipage}
    
    \vspace{2pt}

    \begin{minipage}{\textwidth}
        \centering
        \includegraphics[width=0.95\textwidth]{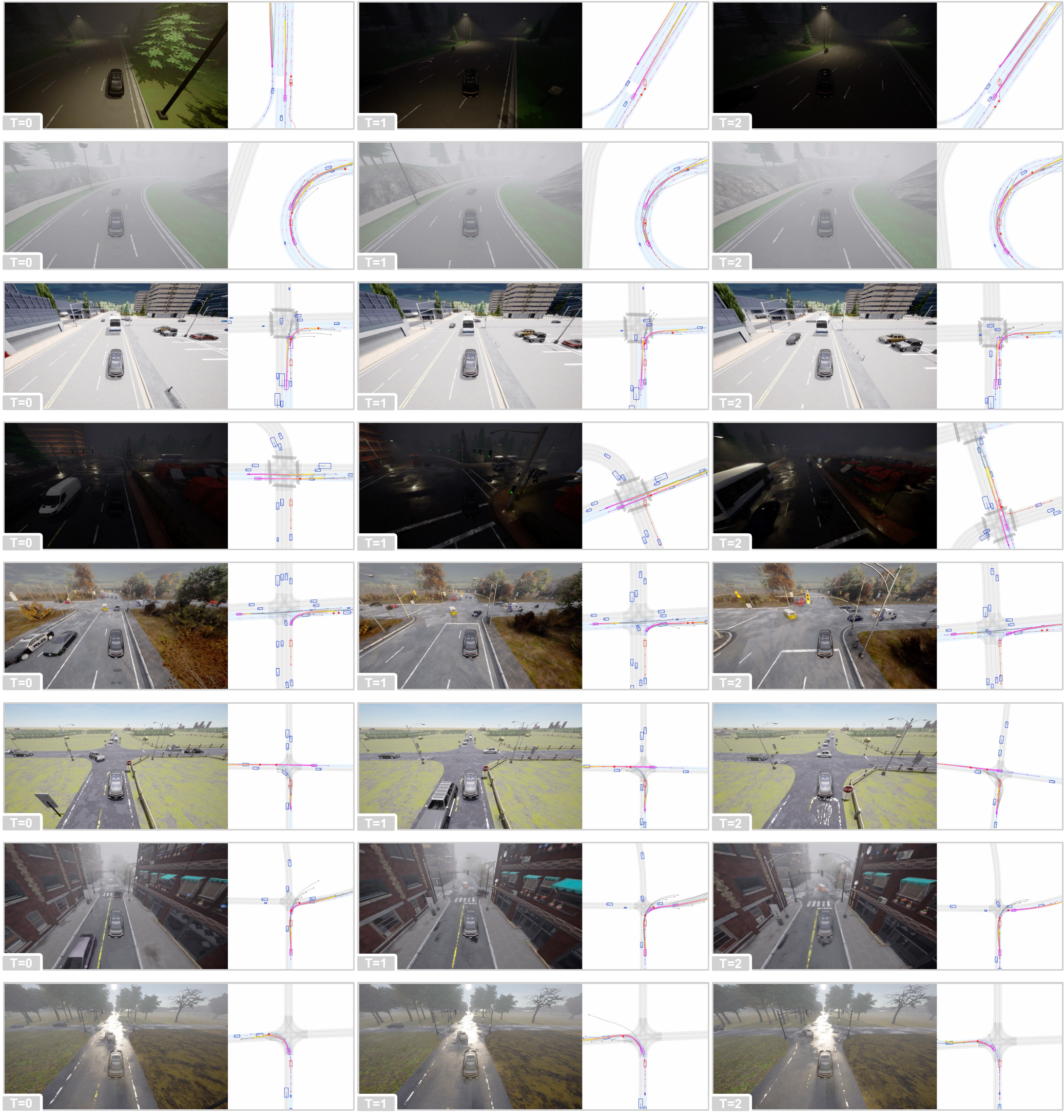}
        \caption*{\small (b) Temporal stability of \method\ in closed-loop simulation.}
        \label{fig:extra_plot}
    \end{minipage}

    \caption{
        \small Visualizations of \method\ in diverse AV-centric scenarios. 
        (a) Robustness of \method\ across diverse AV-centric traffic scenarios. 
        (b) Temporal stability of \method\ in closed-loop simulation.
        CBV is marked in \highlighttransp{CBV-purple}{purple}, AV (PDM-Lite) is in \highlighttransp{AV-red}{red}, and BVs are in \highlighttransp{BV-blue}{blue}.
    }
    \label{fig:diverse_extra}
\end{figure}

\FloatBarrier

\end{document}